\documentclass{article}

\usepackage{amsmath,amssymb,pifont}
\usepackage{multicol}
\usepackage{amstext}
\usepackage{amsthm}
\usepackage{multirow}
\usepackage{booktabs}
\usepackage[skip=0pt]{subcaption}
\usepackage{times}
\usepackage{lipsum}
\usepackage[shortlabels]{enumitem}
\usepackage{cancel}
\usepackage{wrapfig}
\usepackage{array}
\usepackage{siunitx}
\usepackage{csvsimple}
\usepackage[multidot]{grffile}
\usepackage{bbm}
\usepackage{dblfloatfix}
\usepackage{hyperref}
\usepackage{makecell}
\usepackage{bbm, dsfont}
\usepackage{mathtools}
\usepackage{comment}

\usepackage{geometry}
\usepackage{mathpazo}
\usepackage{enumitem}

\usepackage[multiple]{footmisc}
\usepackage{mathrsfs}
\usepackage[disable]{todonotes}
\usepackage{tikz}
\usepackage{hyperref}
\usepackage{cleveref}


\usepackage{algpseudocode,algorithm,algorithmicx}
\usepackage{xfrac}

\usepackage{graphicx} %
\usepackage{color}

\usepackage{array}
\usepackage{amssymb}
\usepackage{amsmath}
\usepackage{xspace}
\usepackage{fancyhdr}
\usepackage{comment}

	


\newcommand{\anoteinline}[1]{\todo[color=cyan!30,inline]{AT: #1}}

\newcommand{\agnote}[1]{\todo[color=purple!30]{\tiny AG: #1}}

\newcommand{\jnote}[1]{\todo[color=red!30]{\tiny JU: #1}}

\hypersetup{final}



\newcommand{\boldzero}{\ensuremath{\boldsymbol{0}}}

\newcommand{\bfc}{\ensuremath{\mathbf{c}}}

\newcommand{\bfv}{\ensuremath{\mathbf{v}}}

\newcommand{\bfx}{\ensuremath{\mathbf{x}}}

\newcommand{\calA}{\ensuremath{\mathcal{A}}}

\newcommand{\calC}{\ensuremath{\mathcal{C}}}
\newcommand{\calD}{\ensuremath{\mathcal{D}}}

\newcommand{\calG}{\ensuremath{\mathcal{G}}}

\newcommand{\calL}{\ensuremath{\mathcal{L}}}
\newcommand{\calM}{\ensuremath{\mathcal{M}}}
\newcommand{\calN}{\ensuremath{\mathcal{N}}}

\newcommand{\calP}{\ensuremath{\mathcal{P}}}
\newcommand{\calQ}{\ensuremath{\mathcal{Q}}}
\newcommand{\calR}{\ensuremath{\mathcal{R}}}


\renewcommand{\Pr}{\mathop{\mathbf{Pr}}}

\newcommand{\E}{\mathop{\mathbf{E}}}

\newtheorem{lem}{Lemma}[section]

\newtheorem{thm}[lem]{Theorem}
\newtheorem{infthm}[lem]{Informal Theorem}

\newtheorem{defn}[lem]{Definition}
\newtheorem{fact}[lem]{Fact}


\DeclareMathOperator*{\argmax}{arg\,max}
\DeclareMathOperator*{\argmin}{arg\,min}

\makeatletter
\newcommand{\vast}{\bBigg@{4}}
\newcommand{\Vast}{\bBigg@{5}}
\makeatother

\newcommand{\ex}[2]{{\ifx&#1& \mathbb{E} \else
\underset{#1}{\mathbb{E}} \fi \left[#2\right]}}
\newcommand{\pr}[2]{{\ifx&#1& \mathbb{P} \else
\underset{#1}{\mathbb{P}} \fi \left[#2\right]}}


\newcommand{\privT}{\theta^{\sf priv}\,}

\newcommand{\ltwo}[1]{\left\|#1\right\|_2}

\newcommand{\grad}{\nabla}

\usepackage{ulem}

\DeclarePairedDelimiterX{\infdivx}[2]{(}{)}{%
  #1\;\delimsize\|\;#2%
}


\newcommand{\mypar}[1]{\smallskip
	\noindent{\textbf{{#1}:}}}
	
\renewcommand{\epsilon}{\varepsilon}

\renewcommand{\tilde}{\widetilde}

\newcommand{\dist}{\calD}

\newcommand{\supp}{\text{supp}}

\newcommand{\set}[1]{\left\{ {#1} \right\}}
\newcommand{\norm}[1]{{\left\Vert {#1} \right\Vert}}
\newcommand{\paren}[1]{\left( {#1} \right)}
\newcommand{\sparen}[1]{\left[ {#1} \right]}

\setlist{nolistsep}
\setlist[itemize]{noitemsep, topsep=0pt}

\newcommand{\dataset}{D}

\newcommand{\domain}{\tau}

\newcommand{\lip}{L}

\setlist{nolistsep}
\setlist[itemize]{noitemsep, topsep=0pt}

\newtheorem{theorem}[lem]{Theorem}

\newcommand{\thetapriv}{\theta^{\mathsf{priv}}}

\newcommand{\Exp}[1]{\mathbb{E}\left[#1\right]}
\newcommand{\Expect}[2]{\mathbb{E}_{#1}\left[#2\right]}

\newcommand{\riskerm}{{\sf Risk}_{\sf ERM}}
\newcommand{\riskpop}{{\sf Risk}_{\sf SCO}}
\newcommand{\threshold}{\psi}
\newcommand{\rashomonemp}{\calG}

\newcommand{\slack}{\gamma}
\newcommand{\tvd}{\lambda}
\newcommand{\gap}{\kappa}

\usepackage{fullpage}
\usepackage[numbers, sort, comma, square]{natbib}

\newtheorem{remark}{Remark}
\newtheorem{definition}[lem]{Definition}

\begin{document}

\normalem

\title{Differentially Private Sampling from Rashomon Sets, and the Universality of Langevin Diffusion for Convex Optimization}

\author{Arun Ganesh\thanks{Google Research.  \texttt{arunganesh@google.com}. Part of this work was done at UC Berkeley while being supported in part by NSF CCF-1816861.} 
\and 
Abhradeep Thakurta\thanks{Google DeepMind. \texttt{athakurta@google.com}.} 
\and 
Jalaj Upadhyay\thanks{Rutgers University. \texttt{jalaj.upadhyay@rutgers.edu}. This work was supported by the Decanal Research grant from Rutgers University.}
}
\maketitle

\begin{abstract}
In this paper we provide an algorithmic framework based on Langevin diffusion (LD) and its corresponding discretizations that allow us to simultaneously obtain: i) An algorithm for sampling from the exponential mechanism~\citep{mcsherry2007mechanism}, whose privacy analysis does not depend on convexity and which can be stopped at anytime without compromising privacy, and ii) tight uniform stability guarantees for the exponential mechanism. As a direct consequence, we obtain optimal excess empirical and population risk guarantees for (strongly) convex losses under both pure and approximate differential privacy (DP). The framework allows us to design a DP uniform sampler from the  Rashomon set. Rashomon sets are widely used in interpretable and robust machine learning, understanding variable importance, and characterizing fairness.

{\color{blue} Note: For ease of presentation, some results appear in the previous version of this paper on arXiv (v3) that do not appear in this version, nor are subsumed by results in this version. Please see Section~\ref{sec:omitted} for more details.}
\end{abstract}

\section{Introduction}
\label{sec:intro}

\anoteinline{Anywhere we state $\calL(\theta^*;D)$, we need to define $\theta^*=\argmin\limits_{\theta\in\calC}\calL(\theta;D)$ unless the context is clear.}

\textit{Differentially private  empirical risk minimization} (DP-ERM)~\citep{chaudhuri2011differentially, ChourasiaYS21, iyengar2019towards, kifer2012private,BST14,smith2017interaction, song2013stochastic, song2020characterizing, WLKCJN17} and \emph{differentially  private stochastic optimization} (DP-SCO)~\citep{asi2021adapting, bassily2019private,bassily2020stability,feldman2019private,kulkarni2021private,gopi2022private} are two of the most widely studied problems in the differential privacy (DP) literature. The optimal algorithms for either of these settings are either based on {\em differentially private stochastic gradient descent} (DP-SGD)~\citep{DP-DL,BST14,song2013stochastic}), or sampling from an appropriate Gibbs distribution (a.k.a.~the exponential mechanism~\citep{mcsherry2007mechanism}). In this paper we revisit the sampling perspective of DP optimization and study its implications. 

At a high level, the Gibbs distribution sampling problem is to generate a sample $\theta$ from the distribution with density proportional to $\exp(-\beta \mathcal L(\theta;D))$. Here $\beta$ is known as the {\em inverse temperature}, and $\calL(\theta;D)=\frac{1}{n}\sum\limits_{i=1}^n\ell(\theta;d_i)$ (with $\ell:\mathbb{R}^p\times\tau\to\mathbb{R}$) is the {\em empirical loss function}. Our main contribution is an algorithmic framework  based on Langevin Diffusion (LD) (see \Cref{fig:LD}) to privately (approximately) sample from a Gibbs distribution, with the following implications: 

\begin{enumerate}
    \item Our framework recovers all existing (and tight) bounds for DP-ERM~\citep{BST14}, and in some cases improves on the best known bounds. The framework provides tight $O(1/n)$-uniform stability guarantees for the Gibbs distribution  on strongly convex losses and $O(1/\sqrt{n})$-uniform stability on mildly regularized convex losses. This is much tighter than the generic $O(\epsilon)$-uniform stability provided by the DP guarantee~\citep{BST14}\footnote{The uniform stability arguments in \cite{BST14} gives the $O(\epsilon + \frac{p}{\epsilon n})$ SCO bound for convex losses under pure-DP. This is at best $O(\sqrt{p/n})$. In contrast, we obtain the optimal $O(1/\sqrt{n} + p / \epsilon n)$.}.     This allows us to obtain optimal DP-SCO bounds under both $\epsilon$-DP and $(\epsilon,\delta)$-DP\footnote{Through out the paper we will assume $\epsilon=O(\log(1/\delta))$.}, and improves on~\cite{asi2021adapting} for the  $\epsilon$-DP case.  In this sense, the LD framework is universal for DP optimization. 

    \item The privacy guarantee of our Langevin Diffusion (LD) based algorithm does not depend on convexity of the loss function,  $\calL(\theta;D)$. Therefore, it can be used in non-convex settings without compromising privacy. 

    \item In the $(\epsilon,\delta)$-DP setting, we can release the complete trajectory of the LD until it reaches the stationary Gibbs distribution. As a direct consequence, our algorithm is ``anytime'' DP, i.e., the privacy is not compromised even if we stop before the chain converges to the stationary distribution. This is a very useful property in practice; in fact, subsequent works~\citep{shejwalkar2022recycling,rabanser2023training} have shown that the path of LD can be used to quantify predictive uncertainty.
\end{enumerate}

\paragraph{Sampling from Rashomon sets} Our framework also allows us to go beyond what can be achieved by known algorithms for private learning. In particular, it allows us to  uniformly and privately sample from the {\em Rashomon set}~\citep{breiman2001statistical} (Definition~\ref{def:privatesampler}), which has been extensively studied in interpretable and robust machine learning~\citep{rudin2022interpretable}, understanding spectrum of variable importance~\citep{fisher2019all}, decision making~\citep{tulabandhula2013machine, tulabandhula2014combining, tulabandhula2014robust}, measuring underspecification~\citep{madras2019detecting}, and characterizing fairness~\citep{coston2021characterizing}. At a high level, Rashomon set is the set of  equally-performing models in terms of training/testing loss. 

\subsection{Related Work}
We start by giving a brief exposition of some of the related work and a literature survey of works in Rashomon set, differentially private learning, and dynamical systems.

\paragraph{Rashomon set and predictive multiplicity} 
Rashomon set has been extensively studied in applied machine learning since its conception~\citep{coker2021theory, fisher2019all,  meinshausen2010stability, letham2016prediction, nevo2017identifying,semenova2022existence, srebro2010smoothness, tulabandhula2014robust} (see the survey~\citep{rudin2022interpretable} for more references), culminating in a recent  work of \cite{semenova2022existence}. For example, \cite{fisher2019all} leverages the Rashomon set in order to understand the spectrum of variable importance and other statistics across the set of good models, and \cite{tulabandhula2013machine, tulabandhula2014combining, tulabandhula2014robust} uses the Rashomon set to assist with decision making. However, it is computationally inefficient to find the simplest model in the Rashomon set, so a natural question is when should we even search for a simpler model. In a recent work, ~\cite{semenova2022existence} showed that, if there is a large Rashomon set of almost-equally-accurate models, a simple model may also be contained in it and that model is guaranteed to generalize well. For this, they defined {\em Rashomon ratio}, which is  the ratio of the volume of the set of accurate models to the volume of the hypothesis space. They then used the insights gained from Rashomon ratios to infer whether a simpler model exists or not.  

Rashomon ratios are defined in terms of volumes of the hypothesis (or model parameter) space. In a recent work, ~\cite{hsurashomon} proposed a different metric known as {\em Rashomon capacity}. It aims to measure the multiplicity of classifier outputs for individual samples, i.e., it measures the spread of the scores with divergence measures for probability distributions (also known as \emph{predictive multiplicity}). In particular, this helps to distinguish Rashomon sets where different predictions are a result of highly different predicted probabilities vs sets where the predictions are different but the come from similar soft outputs.

\paragraph{Some applications of Rashomon sets} Rashomon set has many applications, such as in interpretable and robust machine learning~\citep{fisher2019all, rudin2022interpretable}, understanding spectrum of variable importance~\citep{fisher2019all}, decision making~\citep{tulabandhula2013machine, tulabandhula2014combining, tulabandhula2014robust}, measuring underspecification~\citep{madras2019detecting}, and characterizing fairness~\citep{coston2021characterizing} to name a few. In fact, if we assume that the loss function is smoothness of a loss function, then one can  obtain
a tighter excess risk bound through local Rademacher complexity~\citep{bartlett2005local}. A line of work has also related Rashomon sets with $p$-hacking and robustness of estimation. The central argument there is that the Rashomon
set is a set on which one might conduct a sensitivity analysis for choices made by an analyst. We refer the interested readers to the survey by \cite{rudin2022interpretable}.

\paragraph{Sampling and differential privacy} Several lines of work have designed Markov chains that generate samples from distributions that are close to a given log-concave distribution. For example, there are works gives sampling algorithms with bounds on the distance to the target density  in terms of Wasserstein distance~\citep{dalalyan2020sampling, durmus2017nonasymptotic}, KL-divergence~\citep{durmus2019analysis}, and Renyi divergence~\citep{vempala2019renyi}. For privacy, we need a bound in terms of $\ell_\infty$ distance. For this, the first work that perform sampling with bounded $\ell_\infty$ distance is by \cite{hardt2010geometry}. This was extended to bounded Lipschitz log-concave distribution by \cite{BST14}. Since then, several works have shown efficient algorithms for sampling from log-concave distribution~\citep{mangoubi2021sampling, GaneshT20}. There has been some work that find polynomial time sampling algorithms for special loss functions. For example, \cite{asi2020instance, asi2020near} showed efficient algorithms to sample from the exponential mechanism when the score function has a specific structure, which they call {\em path-length function}. The motivation of \cite{asi2020instance, asi2020near} was to study instance-optimality of certain wide class of statistical problem. A variant of this function (which is quasi-convex) has been recently used by \cite{hopkins2022robustness} for robust high-dimensional parameter estimation
problems, including mean and covariance estimation, when the data is picked from multivariate Gaussian distribution. Using the insights from differential privacy, a recent line of work~\citep{altschuler2022privacy, altschuler2022resolving} have improved (and characterized) the mixing time of the discretization of Langevin diffusion. 

There has been some recent work to study the asymptotic bias introduced by the discretization of Langevin diffusion. In a recent work, \cite{altschuler2022concentration} showed that the stationary distribution of discretization of the Langevin diffusion is sub-Gaussian when the potential function is strongly convex, and  is sub-exponential when the potential function is convex.

\paragraph{Differential privacy and dynamical systems}
The connection between dynamical systems and differential privacy is also not new.~\cite{ChourasiaYS21} and~\cite{ryffel2022differential} study discretization of the LD algorithm as DP-(Stochastic) Gradient Langevin Dynamics (DP-SGLD). They show that under smoothness and strong convexity on the loss function $\calL(\theta;D)$, the privacy cost of DP-SGLD converges to a stationary finite value, even when the number of time steps goes to $\infty$.~\cite{wang2019differentially} used the result by~\cite{raginsky17a} to prove a sub-optimal excess empirical risk of $\widetilde O \left( \frac{p\log(1/\delta)}{\epsilon^2\log(n)} \right)$ for non-convex loss functions. In a concurrent, and complementary work on convex losses,~\cite{gopi2022private} study private optimization and show the universality of exponential mechanisms for both stochastic convex optimization and empirical risk minimization. Their analysis takes the sampling perspective when the diffusion process has  completed.

It is probably important to mention that objective perturbation~\citep{chaudhuri2011differentially,kifer2012private} can be potentially thought of as a (near) universal algorithm for the problem classes considered in this paper, albeit the following two caveats: i) The instantiation of the algorithm for $\epsilon$-DP and $(\epsilon,\delta)$-DP require two different noise models to be drawn from, namely, Gamma distribution, and Normal distribution, and ii) It requires the loss functions $\ell(\theta;\cdot)$ to be twice-continuously differentiable, and $\nabla^2_\theta\ell(\theta;\cdot)$ to have a near constant rank.
As mentioned in the remainder of our paper, Langevin diffusion does not require any such assumptions.\footnote{In particular, we can always ensure twice differentiability by convolving the loss function with the bump kernel~\citep{kifer2012private}, and then make the smoothness parameter finite but arbitrarily large which does not affect the Lipschitzness.}

Recently, \cite{mangoubi2022re} used continuous-time viewpoint to study the error incurred by adding a symmetric Gaussian matrix to input covariance matrix. In particular, they viewed the the perturbed matrix as a continuous-time symmetric matrix diffusion, where each entry of the perturbed matrix  is the value reached by a 
Brownian motion after the time equals to the scaling of variance required for privacy. In particular, the corresponding Brownian motion is well studied in statistical quantum physics and is known as {\em Dyson brownian motion}. 

There is a contemporary and most closely related work of~\cite{gopi2022private} to ours. We defer the comparison to \Cref{sec:gopietal}\footnote{The claim of contemporarity is also supported by the authors of~\cite{gopi2022private}.}.

\subsection{Our Contributions}

Our main contribution is to design a Langevin diffusion (LD) based DP sampler for the following Gibbs distribution: 
$$\exp(-\beta \max\{\calL(\theta; \dataset), \threshold + \min\limits_{\theta^*\in \calC} \calL(\theta^*;D)\}).$$ 

As we will see in Section~\ref{sec:applications}, by setting $\threshold=0$ we obtain optimal DP-ERM/DP-SCO algorithms, and for $\threshold>0$ we obtain a DP Rashomon set sampler. 
In this section, {we first state the main result followed by the uniform stability result for LD. We end with a discussion of discretization of our LD, that outputs a sample within $\delta$ total variation distance of the stationary distribution of LD at the same privacy/utility trade-off.}

We  start with the LD algorithm, described in~\Cref{fig:LD}, which forms the building block for all the algorithms considered in this paper.  {Intuitively, one should think of \eqref{eq:LD} as the limit of noisy gradient descent and \eqref{eq:PLD} as the limit of projected noisy gradient descent, both as step size $\eta \rightarrow 0$.} Here and throughout this paper, $O_\delta(\cdot)$ and $\widetilde{O}_\delta(\cdot)$ hides polylog factors in $1/\delta$.

\begin{figure}[t]
\fbox{
\begin{minipage}{0.95\textwidth}
{\bf Langevin diffusion (LD).} Let $W_t$ be a $p$-dimensional Brownian motion and $\beta>0$ be the {\em inverse temperature}. Then LD is the following stochastic differential equation:
\begin{equation}
    d\theta_t=-\beta \nabla\calL(\theta_t;\dataset)\cdot dt+\sqrt {2}\cdot dW_t.
    \label{eq:LD}
\end{equation}

{\bf ``Projected" Langevin diffusion.} Sometimes, we only have the Lipschitz guarantee within a constrained set. We can also consider the following ``projected'' version of LD: 

{
\begin{equation}
    d\theta_t=-\beta \nabla\calL(\theta_t;\dataset)\cdot dt+\sqrt{2}\cdot dW_t - \nu_t \mu(dt)
    , \forall t \geq 0: \theta_t \in \calC .\label{eq:PLD}
\end{equation}
where $\mu$ is a measure supported on $\{t: \theta_t \in \partial \calC\}$ and $\nu_t$ is an outer unit normal vector at $\theta_t$ for all such $\theta_t$.} See \cite[Section 2.1, 3.1]{BubeckPLD} for a discussion of \eqref{eq:PLD}.
\end{minipage}
}  
\caption{(Projected) Langevin diffusion}
\label{fig:LD}

\end{figure}

\begin{infthm}[Corresponds to Theorems~\ref{thm:expMech-rashomon} and~\ref{thm:expMech-rashomon-approx}]
Assume that the loss functions are $1$-Lipschitz, and the constraint set $\calC$ has diameter at most one. Then there exists a LD process $\{\theta_t\}_{t \geq 0}$ with stationary distribution $\Theta_{\infty}$
proportional to $\exp(-\beta \max\{\calL(\theta; \dataset), \threshold + \min\limits_{\theta^*\in \calC} \calL(\theta^*;D)\})$, and
 \begin{enumerate}
     \item [(i)] $\Theta_\infty$ satisfies $\epsilon$-DP if $\beta = O(\epsilon n)$.
     \item [(ii)] If the loss function is $m$-strongly convex and $M$-smooth, then for $t = \widetilde{O}_\delta\left(\frac{1}{\beta m}\right)$ and $\beta =\widetilde{O}_\delta\left(m \epsilon^2 \min\{n^2, {1 \over M \threshold}\} \right)$, releasing $\{\theta_{t'}\}_{0 \leq t' \leq t}$ is $(\epsilon, \delta)$-DP and $\theta_T$ is within total variation distance (TVD) $\delta$ of $\Theta_\infty$.
 \end{enumerate}
 Furthermore, the privacy guarantee only requires Lipschitzness and smoothness. 
\label{infthm:LD}
\end{infthm}

A key takeaway from the privacy guarantee of Theorem~\ref{infthm:LD} is that, as $\threshold$ becomes smaller, one can run the LD at a higher $\beta$ and thus provide stronger risk guarantees. In particular, we show an excess empirical risk bound of $p/\beta$) in Theorem~\ref{thm:expConv-rashomon-util}. In fact, if $\threshold \leq 1/n^2$, then the choice of $\beta$ is an approximately $n$ times more than that in the $\epsilon$-DP case. We believe the relation $\beta \propto \min\{1 / \threshold, n^2\}$ is necessary. (See Section~\ref{sec:RPrivacy} for a formal reasoning.) 

For part (i) in Theorem~\ref{infthm:LD}, the privacy follows from the analysis of the exponential mechanism. 
For part (ii), we use a continuous analog of the composition theorem for R\'enyi-DP (see Lemma~\ref{lem:renyifinite} and more discussion on our continuous time composition theorem below). To show the sampling guarantee, we show that the stationary distribution satisfies a {\em log Sobolev inequality} (a measure of concentration; see Definition~\ref{def:LSI}) using standard techniques known as the {\em Bakry-Emery criterion} and  {\em Holley-Stroock perturbation principle} (see e.g., Appendix A of \cite{Schlichting_2019}). The convergence guarantee then follows using the results in  \cite{vempala2019renyi}. 

Note that part (ii) of Theorem~\ref{infthm:LD} also shows that $\Theta_\infty$ is private via analyzing privacy of the chain rather than the stationary distribution. This, in particular, shows that the entire trajectory of the LD is private (not just the~\emph{final iterate}). This matters in practice as works such as  \citep{shejwalkar2022recycling,rabanser2023training} have shown improved performance and uncertainty estimation from using intermediate values. 

There is another advantage of analyzing the privacy of the entire chain. Unlike the sampling/utility guarantee, the privacy in part (ii) does not rely on convexity, i.e., we can use it for non-convex loss functions. Furthermore, by taking $\threshold = 0$ and comparing to the Gaussian mechanism~\citep{ODO}, we can see our privacy guarantee is tight up to log factors.

\paragraph{Uniform stability of Langevin diffusion (Section~\ref{sec:roshPOP})}
While empirical guarantees are useful in their own regards, it is often desirable to get population risk guarantee. We derive a population risk guarantee by showing the uniform stability property of the LD on thresholded losses. This implies that any empirical accuracy guarantee for the Gibbs sampler in Theorem~\ref{infthm:LD} also extends to population risk guarantees:

\begin{infthm}[Corresponds to Theorem~\ref{thm:rashomon-uniform}]
Under the same assumptions and choice of $\beta$ as in Theorem~\ref{infthm:LD}, the LD at any time (including at its stationary distribution) satisfies $O(L \sqrt{{\threshold}/{m}} + {L^2}/\paren{m n})$-uniform stability. 
\label{infthm:PopR}
\end{infthm}
 The proof uses the fact that the time-independent uniform stability for finite-time LD implies the same uniform stability for its Gibbs distribution, which could be of independent interest. This in particular gives optimal SCO rates under $\epsilon$-DP guarantee that matches non-private bound of $O(1/\sqrt{n})$ as $\epsilon \to \infty$, thereby, improving on the state-of-the-art results  (\Cref{sec:applications}). 

\paragraph{Continuous time composition for LD (Section~\ref{sec:RPrivacy})} 
We cannot use standard composition theorems of DP~\citep{dwork2014algorithmic} because the underlying algorithm is a continuous time process. 
We quantify the R\'enyi divergence between two LD processes when run on neighboring data sets. 
A similar result was also provided in \cite[Theorem 1]{ChourasiaYS21} only for the last iterate, $\theta_t$. 
In contrast, we prove a divergence bound between the entire histories $\{\theta_{t'}\}_{0 \leq t' \leq t}$, which enables us to output weighted averages of $\theta_{t'}$'s privately. Furthermore, it is proven using only tools from the differential privacy literature and \emph{Fatou's lemma}, providing an arguably much simpler proof. 

\agnote{I don't think our SGLD result differs too much from the previous ones in terms of complexity}

\paragraph{Discretizing our LD (Section~\ref{sec:sgld})}
In general, sampling from a continuous-time object such as LD is intractable in practice. A common technique for approximately sampling from the distribution induced by an LD is the {\em Stochastic Gradient Langevin Dynamics}  (SGLD), which has been extensively studied in the literature (e.g. \cite{dalalyanLangevin, raginsky17a, pmlr-v83-cheng18a, ChengNCLD, vempala2019renyi, GaneshT20, erdogdu2021convergence, ChewiLSILD,welling2011bayesian,ryffel2022differential,ChourasiaYS21}). SGLD uses $T$ steps of noisy gradient descent with step size $\eta$, which approximates running \cref{eq:LD} for time $t = T \eta$. Using results in \cite{ChewiLSILD}, we show that SGLD provides a private approximation (w.r.t. TVD) of our Gibbs sampler in a polynomial number of gradient oracle calls. One disadvantage of this result is that the oracle complexity has a worse dependence on the problem parameters than DP-SGD with standard hyperparameters. For example, DP-SGD's iteration complexity in \cite{bassily2019private} is constant w.r.t. dimensions as it goes to infinity, whereas our SGLD iteration complexity has a linear dependence on dimensions. We leave closing this gap as a question for future investigation.

\anoteinline{Compare the gradient complexity w.r.t. full-batch DP-SGD for DP-ERM when $\threshold=0$.}

\subsection{Applications of Our Algorithmic Framework}
\label{sec:applications}
\mypar{Recovering DP-ERM/DP-SCO bounds (Section~\ref{sec:DPERMSCO})} We show that setting $\threshold=0$ for the Gibbs sampler in Theorem~\ref{infthm:LD} retrieves the optimal DP-ERM/DP-SCO bounds, i.e., using only the LD sampler as a primitive, one can achieve all the existing bounds for DP-ERM/DP-SCO and improve some prior results (see Table~\ref{tb:resultsERMSCO}) as  corollaries. 
Since most of these results are known in the literature, in the next theorem, we only present the improvement exhibited in this paper.

\begin{infthm}[Corresponds to  Theorems~\ref{thm:expConvSCO} and~\ref{thm:expStrongConv}]
For $L$-Lipschitz convex losses over $\calC$, there exists an $\epsilon$-DP algorithm 
with  $O\left(\frac{L\ltwo{\calC}p}{\epsilon n} + \frac{L \ltwo{\calC}}{\sqrt{n}}\right)$ excess population risk. Further, if the loss function is also $m$-strongly convex, then there is an $\epsilon$-DP algorithm that has excess population risk of $O\left(\frac{L^2 p^2 \log n}{m \epsilon^2 n^2} + \frac{L^2}{m n}\right)$. 
\label{infthm:PopEpsilonConvex}
\end{infthm}

The best prior bounds were $O\left(\frac{p\log n}{\epsilon n}+\frac{\log^{3/2} n}{\sqrt n}\right)$ for convex losses, and $O\left(\frac{p^2 \log^2 n}{m\epsilon^2 n^2}+\frac{\log^3 n}{m n}\right)$ for $m$-strongly convex losses, both due to~\cite{asi2021adapting}, which lacked the ability to match the optimal non-private bounds of $O\left({1}/{\sqrt n}\right)$ (and $O\left({1}/{m n}\right)$ respectively) as $\epsilon\to\infty$. (To the best of our knowledge, this gap is inherent in the technique of~\cite{asi2021adapting}.) 

In most cases, the best DP-ERM/SCO bounds can be achieved by setting $\threshold = 0$ in Theorems~\ref{infthm:LD} and~\ref{infthm:PopR} (in the convex case, after adding a quadratic regularizer to enforce strong convexity). The second result in Theorem~\ref{infthm:PopEpsilonConvex} is the only one which does not directly apply the Rashomon sampler's risk bounds to the (regularized) loss. Instead, we use an \textit{iterated exponential mechanism}, which samples from a sequence of Gibbs distributions defined over an adaptively chosen sequence of sets $\calC_{k}\subset\calC_{k-1}\subseteq\ldots\subseteq\calC_{0}=\calC$. To analyze it, we compose the privacy/utility analysis obtained independently for the Gibbs distribution over each of these sets. Our analysis simplifies the analysis in~\cite{BST14} and does not require running two different algorithms (i.e., output perturbation and exponential mechanism) to obtain the optimal trade-off. We give the full description of the iterated exponential mechanism and its analysis in Section~\ref{sec:DPERMSCO}.

Additionally, for DP-ERM, in Appendix~\ref{sec:lb} we provide a lower bound for non-convex losses which demonstrate that, unlike for convex losses, it is not possible to achieve better utility (up to factors in $\log(1/\delta)$) than $\widetilde{O}(p/\epsilon n)$ even if we relax privacy guarantee to $(\epsilon,\delta)$-DP. We prove this result by appealing to the lower bound in~\cite{steinke2017tight}. 

\paragraph{Sampling from Rashomon Sets (Section~\ref{sec:rashomonSampling})} Our motivation to design uniform sampler for the Rashomon set  stems from quantifying predictive multiplicity for Rashomon sets (described later)~\citep{hsurashomon}\footnote{Predictive multiplicity refers to  models that achieve statistically-indistinguishable
performance on a test set assign wildly different predictions to an input sample. 
Therefore, if we naively pick a  model from a Rashomon set, it can have highly disparate  impact on the predictions on an individual test sample resulting in unfair and potentially individual level harm~\citep{smith2020algorithmic,creel2022algorithmic}.} and that it is a strict generalization of the problems of DP-ERM and DP-SCO. Given the wide applicability of Rashomon set mentioned earlier, we believe studying this problem will have more implications. 

We start with the formal definition of uniform sampling from Rashomon set.

\begin{defn}
[$(\lambda, \psi, \gamma)$-Rashomon sampler]
\label{def:privatesampler}
Given a loss function $\calL(\theta;D)=\frac{1}{n}\sum\limits_{i=1}^n\ell(\theta;d_i)$ 
defined over a data set $D$, a constraint set $\calC$, and a threshold $\threshold$, an algorithm $\mathcal A$ is $(\lambda, \psi, \gamma)$-sampler for the Rashomon set $$\rashomonemp=\left\{\theta \in \calC:\calL(\theta;D)\leq\min\limits_{\theta^*\in\calC}\calL(\theta^*;D)+\threshold\right\}$$ if distribution of $\privT \gets \mathcal A$ is $\tvd$-far in total variation distance (TVD) from a distribution $\pi$ such that 
\begin{enumerate}
    \item \textbf{Uniform sampling:} The marginal distribution of $\pi$ over $\rashomonemp$ is uniform.
    \item \textbf{Maximality condition:} For any $\theta \in \calG$ and $\theta' \notin \calG$, the distribution $\pi$ assigns probability density to $\theta$ which is greater than or equal to that of $\theta'$. 
    \item \textbf{Excess empirical risk guarantee:} $\mathbb{E}_\calA\left[\calL(\privT;D)\right]\leq \min\limits_{\theta^*\in\calC}\calL(\theta^*;D)+\threshold+\slack.$
\end{enumerate}
\end{defn}
The \emph{maximality} condition rules out trivial and uninteresting solutions. It ensures that (ignoring privacy constraints), it is possible to combine the Rashomon sampler with rejection sampling to efficiently get an uniform sample from $\calG$. In particular, we show in Theorem~\ref{thm:hitting} that, if $\threshold = \omega(p \gamma)$, then the Gibbs sampler has $1-o(1)$ probability of hitting the Rashomon set for the values of $\beta$ stated in Theorem~\ref{infthm:LD}. 
We presented Definition~\ref{def:privatesampler} with respect to empirical accuracy guarantee for the ease of presentation. One can also define \emph{excess population risk guarantee}: $$\mathbb{E}_{\calA,d\sim\dist}\left[\ell(\privT;d)\right]\leq \threshold+\min\limits_{\theta\in\calC}\mathbb{E}_{d\sim\dist}\left[\ell(\theta;d)\right]+\slack',$$ 
where $\slack'$ is the {\em population level slack} and $\dist$ is the distribution from which the data samples in the data set $D$ are drawn i.i.d. 

In Theorems~\ref{thm:expMech-rashomon} and~\ref{thm:expMech-rashomon-approx}, we show that our LD based Gibbs sampler from Theorem~\ref{infthm:LD} is a $(0,\threshold,p/\beta)$-Rashomon sampler. Furthermore, in Theorem~\ref{thm:rashomon-population}, we show that the excess population risk (i.e., $\threshold+\gamma'$) for the Rashomon set sampler, is bounded by $\frac{p}{\beta}+O\left(\threshold+L\sqrt{\threshold/m}+L^2/mn\right)$. We give a summary of the bounds in Table~\ref{tb:resultsrashomon}.

We next discuss the use case of predictive multiplicity eluded above. Consider a classification problem with $c$ classes {and $\Delta_c$ be the probability simplex}. Let $(y,\bfx)$ be a test sample and let $f(\bfx;\theta)\in\Delta_c$ be the prediction function that provides a probability distribution across the $c$ classes. The objective is to estimate the variance,  $\mathsf{Var}(f(\bfx,\theta))$, where $\theta\sim_{\sf unif}\rashomonemp$. One can get its approximate estimate by sampling $\left\{\theta_1,\ldots,\theta_k\right\}\sim_{\sf iid}\rashomonemp$, and estimating the standard deviation of $\{f(\bfx,\theta_1),\ldots,f(\bfx,\theta_k)\}$. While there is a privacy cost in sampling $k$ models,~\cite[Section 6.4]{brawner2018bootstrap} shows that the variance estimation comes at~\emph{no additional privacy cost} when compared to that of outputting a single model. Given the standard deviation, one can then decide on the class to predict for $\bfx$, either via further randomization, or other strategy, including more sophisticated measures like {\em Rashomon capacity}~\citep{hsurashomon}. We leave exploring their DP variants for future research.

\mypar{Organization} For the ease of presentation and owing to the generality of the Rashomon set sampling problem, we state all our main results in that context in Section~\ref{sec:rashomonSampling}. In Section~\ref{sec:sgld} we provide the details for the discretization of the LD algorithm. In Section~\ref{sec:DPERMSCO} we provide DP-ERM/DP-SCO results obtained by setting $\threshold=0$ for the Rashomon set sampler. Finally, in Section~\ref{sec:discussion} we end with discussions and future directions. We enumerate our notations in \Cref{tb:notation}.

{\color{blue}
\subsection{Omitted results from v3}\label{sec:omitted}

Most results in the previous version of this paper on arXiv (v3) appear in this version or are subsumed by results in this version. For ease of presentation, a few results in v3 were not carried over. The following is a complete list of the omitted results:
\begin{itemize}
    \item Section 3.3 of v3, which gives a tighter analysis of the empirical loss guarantees for the continuous exponential mechanism on non-convex losses than the one in Bassily et al. \cite{BST14} by removing the ``small ball' assumption.
    \item Section 5 and Appendix F of v3 analyze the DP-ERM/SCO guarantees of finite-time LD under approximate DP by a gradient descent-like analysis (as opposed to the results in this version, which use the analysis of the exponential mechanism).
    \item Section 7 of v3 (i) gives intuition for why the sum of step sizes in DP-SGD can be quite small (ii) shows that at some time when DP-SGD/LD achieve the asymptotically optimal ERM bound, their output distribution is total variation distance $1-o(1)$ from their stationary distribution. Note that (ii) in v3 is shown for a loss which is not strongly convex, i.e. (ii) does not contradict the results in this paper which rederive the optimal ERM bound by showing the chain mixes for a strongly convex loss.
    \item Section 8 of v3 shows a bound on the last-iterate R\'enyi divergence between running DP-LD on two adjacent databases that does not go to infinity as $t$ goes to infinity. This is somewhat subsumed by the analysis in~\cref{sec:rashomonSampling}, which shows a qualitatively similar statement in terms of approximate DP instead of R\'enyi DP.
\end{itemize}
}
\vspace{-3mm}
\section{Rashomon Set Sampling}
\label{sec:rashomonSampling}
In this section, we provide the privacy analysis (Sections~\ref{sec:RPrivacy} and~\ref{app:simple})\footnote{The analysis in Section~\ref{sec:RPrivacy} is via analyzing the privacy of the path of the LD that converges to the Gibbs distribution, and an alternate one in Section~\ref{app:simple} via directly analyzing the privacy of the Gibbs distribution.}, and the utility analysis (Sections~\ref{sec:empUtil} and~\ref{sec:roshPOP}) for the Rashomon set sampler based on the Gibbs distribution proportional to $\exp(-\beta \widetilde{\calL}(\theta;D))$, where $$\widetilde{\calL}(\theta;D):= \max\{\calL(\theta;D), \threshold + \min\limits_{\theta^*\in\calC}\calL(\theta^*;D)\}.$$ 

Both the privacy and the utility guarantee primarily depend on the choice of the inverse temperature $\beta$. Under $(\epsilon,\delta)$-DP, we can operate with higher values of $\beta$ than in the case of $\epsilon$-DP (see Theorems~\ref{thm:expMech-rashomon} and~\ref{thm:expMech-rashomon-approx}): this translates to a better utility for certain choices of  $\threshold$. In Table~\ref{tb:resultsrashomon} we provide a summary of the empirical/population risk guarantees for our Rashomon samplers satisfying a given privacy constraint.

In our algorithmic version of the Gibbs distribution, we instantiate it with the LD described in~\Cref{fig:LD}. The LD that is used to instantiate the Gibbs distribution in the $\epsilon$-DP case unfortunately requires running the algorithm for $t\to\infty$ to reach within $\epsilon$ of the stationary distribution in terms of $\ell_\infty$-distance\footnote{\cite{BST14,mangoubi2021sampling} provides rejection sampling based polytime algorithms, but  lack the generalization properties of LD.}. However, if we are willing to tolerate $(\epsilon,\delta)$-DP guarantee, then the LD can be run in time $\approx \frac{\log(1/\delta)}{\beta m}$, where $m$ is the strong convexity parameter. A standard discretization of the LD we use in this section (via SGLD), that makes the algorithm run on a finite precision machine, can be found in Section~\ref{sec:sgld}. All the missing proofs of this section appear in \Cref{sec:deferredsectionrashomonsampling}.

\begin{table}[ht]
\begin{center}
\begin{tabular}{|c|c|c|}
\hline
Privacy guarantee & $\epsilon$-DP & $(\epsilon,\delta)$-DP\\
\hline
Excess empirical risk ($\gamma$)   &   $\frac{Lp}{\epsilon n}$        &   $\frac{L^2p \log(1/\delta)}{m\epsilon^2 n^2}$          \\
\hline 
Excess population risk & $\frac{Lp}{\epsilon n} + \threshold + L \sqrt{\frac{\threshold}{m}} + \frac{L^2}{m n}$      &     
$\frac{L^2p\log(1/\delta)}{m\epsilon^2 n^2} + \threshold + L \sqrt{\frac{\threshold}{m}} + \frac{L^2}{m n}$            \\
\hline
\end{tabular}
\end{center}
\caption{Summary of our Rashomon sampler guarantees. 
In all results, $\lambda$ (the sampling error) is 0. In bounds where the parameters appear, we assume $L$-Lipschitzness, $m$-strong convexity, and $M$-smoothness within $\calC$. }
\label{tb:resultsrashomon}
\vspace{-5mm}
\end{table}

\subsection{Privacy Guarantees and Running Time}
\label{sec:RPrivacy}
The privacy in $\epsilon$-DP case follows from the fact that $\max\{\calL(\theta; \dataset), \min\limits_{\theta^*\in\calC}\calL(\theta^*; \dataset) + \threshold\}$ cannot change by more than $L\ltwo{\calC}/n$ when we change one data point because each $\ell\in[0, L \ltwo{\calC}]$. 

\begin{thm}
[$\epsilon$-DP sampler; Theorem 3.1 of \cite{BST14}]
Suppose we have a constraint set $\calC$ and a loss function $\ell(\theta; d)$ such that for all $d$, $\ell(\cdot; d)\in\mathbb{R}^+$, and is $L$-Lipschitz within $\calC$. Then, sampling $\privT$ from the Gibbs distribution $\exp(-\beta \max\{\calL(\theta; \dataset), \min\limits_{\theta^* \in \calC}\calL(\theta^*;D)+\threshold\})$ is $\epsilon$-differentially private for $\beta = O(\frac{\epsilon n}{ L \ltwo{\calC}})$ and all $\threshold$.
\label{thm:expMech-rashomon}
\end{thm}

\noindent \textbf{$(\epsilon,\delta)$-DP Sampler:} If $\ell(\theta;\cdot)$ is $m$-strongly convex  and our goal is $(\epsilon,\delta)$-DP, we can use larger values of $\beta$. However, for the settings when $\threshold>0$, we would additionally require $\ell(\theta;\cdot)$ to be $M$-smooth. 
We first need a ``continuous'' composition theorem that bounds the R\'enyi divergence between two instances of LDs run on adjacent databases:

\begin{lem}\label{lem:renyifinite}
Let $\theta_0, \theta_0'$ have the same distribution $\Theta_0$, $\theta_t$ be the solution to \eqref{eq:PLD} given $\theta_0$ and data set $\dataset$ (and correspondingly $\theta'_0$ and $\theta'_t$ for a data set $D'$). Let $\Theta_{[0,t]}$ ({$\Theta'_{[0,t]}$}) be the distribution of the trajectory of LD $\{\theta_{t'}\}_{t' \in [0,t]}$ ({$\{\theta'_{t'}\}_{t' \in [0,t]}$}, respectively). Suppose we have that $\ltwo{\grad \calL(\theta; \dataset) - \grad \calL(\theta; \dataset')} \leq \Delta$ for all $\theta$. Then $\forall\alpha \geq 1$:
$$R_\alpha(\Theta_{[0,t]}, \Theta_{[0,t]}') \leq \frac{\alpha \beta^2 \Delta^2 t}{4}.$$
\end{lem}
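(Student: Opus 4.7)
The plan is to bound the Rényi divergence by Euler--Maruyama discretization of~\eqref{eq:PLD}, analyze one step as a Gaussian mechanism, compose the bounds, and pass to the continuous limit via Fatou's lemma. The key observation is that on an interval $[t, t+\eta]$ the SDE behaves (to first order) like the Gaussian mechanism applied to $\grad \calL$: the drift gap between the $\dataset$ and $\dataset'$ processes has norm at most $\eta \beta_t \Delta$, while the noise is isotropic Gaussian with per-coordinate variance $2\eta$ (since $\beta_t\cdot \sqrt{2}/\beta_t = \sqrt{2}$).

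First I would fix $\eta > 0$, couple the two processes by $\theta_0^{(\eta)} = \theta_0'^{(\eta)} \sim \Theta_0$, and define for $k = 0, 1, \ldots, T/\eta - 1$
\[
\theta_{k+1}^{(\eta)} = \Pi_\calC\!\bigl(\theta_k^{(\eta)} - \eta \beta_{k\eta}\, \grad \calL(\theta_k^{(\eta)}; \dataset) + \sqrt{2\eta}\,\xi_k\bigr), \qquad \xi_k \sim N(0, I_p),
\]
and analogously $\theta_{k+1}'^{(\eta)}$ with $\dataset$ replaced by $\dataset'$. Conditioned on $\theta_k^{(\eta)} = \theta_k'^{(\eta)}$, the two pre-projection candidates are Gaussians with common covariance $2\eta I_p$ whose means differ by at most $\eta \beta_{k\eta} \Delta$ (by the sensitivity hypothesis). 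The standard Rényi--Gaussian identity combined with post-processing through $\Pi_\calC$ then yields a per-step conditional divergence of at most $\alpha \eta \beta_{k\eta}^2 \Delta^2 / 4$. Adaptive composition of Rényi DP over the $K = T/\eta$ steps bounds the Rényi divergence of the joint law of the entire discrete trajectory by $\sum_{k=0}^{K-1} \alpha \eta \beta_{k\eta}^2 \Delta^2 / 4$, a Riemann sum converging to $\alpha \Delta^2 \int_0^T \beta_t^2\, dt / 4$ as $\eta \to 0$.

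To lift this to the path-valued continuous process, extend each discrete trajectory to an element of $C([0,T]; \mathbb{R}^p)$ by, e.g., piecewise linear interpolation, obtaining random paths $\widetilde\Theta^{(\eta)}, \widetilde\Theta'^{(\eta)}$; the Rényi bound still applies since interpolation is deterministic post-processing. Standard convergence results for (projected) Euler--Maruyama under $M$-smoothness yield $\widetilde\Theta^{(\eta)} \Rightarrow \Theta_{[0,T]}$ (and similarly for the primed process) as $\eta \to 0$. Rényi divergence of order $\alpha$ is lower semi-continuous in its pair of arguments under weak convergence, which is itself a direct consequence of Fatou's lemma applied to $(dP/dQ)^\alpha$. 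Therefore
\[
R_\alpha\!\bigl(\Theta_{[0,T]}\,\|\,\Theta_{[0,T]}'\bigr) \;\leq\; \liminf_{\eta \to 0}\, R_\alpha\!\bigl(\widetilde\Theta^{(\eta)}\,\|\,\widetilde\Theta'^{(\eta)}\bigr) \;\leq\; \frac{\alpha \Delta^2}{4} \int_0^T \beta_t^2\, dt.
\]

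I expect the main technical obstacle to be the rigorous exchange of limit and Rényi functional at the level of path-space laws: the projection map $\Pi_{\calC,\theta}$ is only piecewise smooth at $\partial \calC$, so weak convergence of the interpolated Euler iterates to the unique solution of~\eqref{eq:PLD} must be invoked carefully (it is known under the $M$-smoothness hypothesis cited in the preamble). Once that convergence is secured, the Fatou/lower-semi-continuity step is clean, and the final Riemann-sum limit is immediate. Note also that no convexity of $\calL$ is used anywhere in the argument, which is essential for the non-convex applications in later sections.
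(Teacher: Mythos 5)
Your proposal is correct and follows essentially the same route as the paper's proof: discretize \eqref{eq:PLD} into projected noisy gradient steps, bound each step by the Gaussian R\'enyi divergence plus post-processing through $\Pi_\calC$, apply adaptive RDP composition, and pass to the limit via Fatou. The only cosmetic differences are that the paper uses the exact integral $\int_{(i-1)T/m}^{iT/m}\beta_t\,dt$ as the per-step drift coefficient (rather than the left-endpoint value $\eta\beta_{k\eta}$) and handles the trajectory-level statement by having each composed map output the joint law of all iterates so far, whereas you interpolate to path space and invoke lower semi-continuity of $R_\alpha$ under weak convergence.
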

The idea behind the proof is to use a bound on the divergence between Gaussians and RDP composition to provide a bound on the divergence between the projected noisy gradient descents on datasets $\dataset$ and $\dataset'$. Then, taking the limit as the step size in gradient descent goes to 0 and applying Fatou's lemma (Lemma~\ref{lem:Fatou}), we get the bound above. A full proof is deferred to Appendix~\ref{app:langevin-deferred}. R\'enyi divergence bounds imply $(\epsilon, \delta)$-DP privacy guarantees (Fact~\ref{fact:renyitoapx}), which we use to prove the following theorem in Appendix~\ref{sec:rashomon-privacy-proof}:

 {
\begin{thm}
[$(\epsilon,\delta)$-DP sampler]
\label{thm:expMech-rashomon-approx}
Suppose we have a constraint set $\calC$ and a loss function $\ell(\theta; d)$ such that for all $d$, $\ell(\cdot; d)\in\mathbb{R}^+$ is $m$-strongly convex, $M$-smooth,  and is $L$-Lipschitz within $\calC$. For $t = \tilde{O}_\delta\left(\frac{1}{\beta m}\right)$ and an appropriate choice of $\theta_0$, let $\Theta_t$ be the distribution over $\theta_t$ given by running \eqref{eq:PLD} on $\max\{\calL(\theta; \dataset), \min\limits_{\theta^*\in\calC}\calL(\theta^*;D)+\threshold\}$. Then $\Theta_t$ is within total variation distance $\delta$ of the Gibbs distribution on $\max\{\calL(\theta; \dataset), \min\limits_{\theta^*\in\calC}\calL(\theta^*;D)+\threshold\}$, and is $(\epsilon, \delta)$-DP if any of the following holds:
\begin{enumerate}
    \item [(i)] $\threshold\in\left(\frac{2L^2}{Mn^2},\frac{L\ltwo{\calC}}{2}\right]$, and $\beta = \widetilde{\Theta}\left(\frac{\epsilon^2 (m/M)}{ \log(L\ltwo{\calC}/\delta^2)\log(1/\delta)}\cdot \frac{1}{\threshold} \right)$, 
    \item [(ii)] $\threshold \leq \frac{2L^2}{Mn^2}$ and $\beta = \widetilde{\Theta}\left(\frac{\epsilon^2 n^2 m }{L^2 \log(L\ltwo{\calC}/\delta^2)\log(1/\delta)} \right)$. 
\end{enumerate}
Furthermore, the privacy holds even if we release the entire trajectory $\{\Theta_{t'}\}_{t' \in [0, t]}$, and even without convexity.
\end{thm}

If $\threshold$ has a dependence $o(1/n)$ on $n$, this gives a better dependence of $\beta$ on $n$ than Theorem~\ref{thm:expMech-rashomon}. For privacy, we first bound the sensitivity of the thresholded loss by $O(\max\{L/n, \sqrt{M \threshold}\})$. We then appeal to  
 Lemma~\ref{lem:renyifinite} and the translation from R\'enyi divergence bounds to $(\epsilon, \delta)$-DP bounds (Fact~\ref{fact:renyitoapx}). Since Lemma~\ref{lem:renyifinite} does not require convexity and allows for releasing the entire trajectory, the same is true for the privacy guarantee of  Theorem~\ref{thm:expMech-rashomon-approx}.
The sampling guarantee is given by showing that the Gibbs distribution satisfies log-Sobolev inequality (LSI; see Definition~\ref{def:LSI}), a measure of concentration. This implies convergence of LD to the Gibbs distribution by results in \cite{vempala2019renyi}. A few comments are in order about the theorem.

\paragraph{Direct analysis of Gibbs distribution}
Note that by triangle inequality, Theorem~\ref{thm:expMech-rashomon-approx} also implies that, under strong convexity, the Gibbs distribution is $(\epsilon, 2\delta)$-DP. 
We can also analyze the Gibbs distribution directly: Since the Gibbs distribution satisfies LSI, one can obtain an isoperimetric inequality for the probability measure via~\cite{ledoux1999concentration}. Using this isoperimetric inequality and the coupling between Gibbs distributions that we later state in  Theorem~\ref{thm:rashomon-uniform}, one can obtain a bound on $\beta$ that improves Theorem~\ref{thm:expMech-rashomon-approx} by log factors, giving tight bounds in the case $\threshold = O\left( \frac{L^2}{Mn^2}\right)$. However, this privacy proof relies heavily on the convexity and does not give an ``anytime'' private sampler like our LD-based proof. For the sake of completeness, we provide a proof of this result in Section~\ref{app:simple} (see Theorem~\ref{thm:GLLRashomon} for a precise statement).}

\paragraph{Dependence of $\beta$ on $\threshold$} 
We believe the relation $\beta \propto \min\{1 / \threshold, n^2\}$ in Theorem~\ref{thm:expMech-rashomon-approx} is necessary. 
To see this, consider mean estimation on an all zero databases $D$ and on neighboring $D'$ that contain just a single 1. Fix $\epsilon = 1$ for simplicity. The Gibbs samplers for these datasets have densities proportional to $\exp(-\beta \min\{\ltwo{x}^2/2, \threshold\})$ and $\exp(-\beta \min\{\ltwo{x - 1/n}^2/2, \threshold\})$, respectively. For the case $\threshold = 0$, this is just a Gaussian mechanism with sensitivity $1/n$ and variance $1/{\beta}$, and one way to argue this mechanism is differentially private is to provide a tail bound on $x$, which gives a tail bound on the privacy loss. For $\threshold > 0$, the unnormalized density decreases by a factor of $\exp(-\Omega(\beta \threshold))$ in the range $[-\sqrt{2\threshold}, \sqrt{2\threshold}]$ when we apply thresholding to the loss function. In turn, if most of the probability mass is in this interval, a tail bound on events outside this interval can get worse by a factor of $\exp(-\Omega(\beta \threshold))$.  Suppose we use $\beta \approx n^2$, which gives $(1, \delta)$-DP for $\threshold = 0$. Then the interval $[-\sqrt{2\threshold}, \sqrt{2\threshold}]$ contains all points within $\sqrt{\threshold/\beta} \approx n \sqrt{\threshold}$ standard deviations of the mean. So this interval contains most of the probability mass of the mechanism when $\threshold = \widetilde{\Omega}(1/n^2)$. This roughly matches the ``transition point'' in Theorem~\ref{thm:expMech-rashomon-approx}.  In order for a tail bound that holds w.p. $1-\delta$ on the Gaussian mechanism to be non-vacuous after thresholding, we need $\beta \threshold = O(\log(1/\delta))$. This roughly matches our choice of $\beta$ after the ``transition point.'' 

\paragraph{Running time} In the following discussion, we will assume $L=\ltwo{\calC}=\Theta(1)$ for brevity. The LD for the $\epsilon$-DP sampler (mentioned in Theorem~\ref{thm:expMech-rashomon}) runs for time $t\to\infty$ to obtain the privacy/utility trade-off obtained by Theorems~\ref{thm:expMech-rashomon} and~\ref{thm:expConv-rashomon-util}. However, if we are willing to tolerate $(\epsilon,\delta)$-DP, then assuming the loss function $\calL(\theta;D)$ is $m$-strongly convex, one can obtain asymptotically the same privacy/utility trade-off, and run for time $t_\beta=\widetilde{O}_{\delta}\left(\max\left\{\frac{1}{\beta m},\frac{\threshold}{m}\right\}\right)$. This  follows from  Lemma~\ref{lem:rashomon-convergence}. Setting $\beta=\epsilon n$ from Theorem~\ref{thm:expMech-rashomon}, we have $t_\beta=\widetilde{O}_{\delta}\left(\max\left\{\frac{1}{m\epsilon n},\frac{\threshold}{m}\right\}\right)$. The $(\epsilon,\delta)$-DP sampler from Theorem~\ref{thm:expMech-rashomon-approx} also runs in time $t_\beta$, where $\beta$ is chosen based on Theorem~\ref{thm:expMech-rashomon-approx}. Hence, to obtain the best $(\epsilon,\delta)$-DP Rashomon sampler, one needs $t_\beta=\widetilde{O}_{\delta}\left(\max\left\{\frac{1}{m\epsilon^2 n^2},\frac{\epsilon^2 M\threshold}{m}\right\}\right)$. In Section~\ref{sec:sgld}, we discuss how one can obtain an approximate sampler using only discrete noisy gradient steps.

\paragraph{Probability of hitting the Rashomon Set}
The Gibbs measure induced by the LD algorithm i.e., \[\exp(-\beta \max\{\calL(\theta; \dataset), \threshold + \min_{\theta \in \calC} \calL(\theta;\dataset)\})\]  trivially satisfies the maximality condition for the Rashomon sampler. In Theorem~\ref{thm:hitting}, we show that the $\epsilon$-DP Rashomon set sampler hits the set $\calG$ with probability at least  $1-o(1)$, if $\threshold=\widetilde{\omega}(p^2/(\epsilon n))$. (Here, we assumed all other parameters to be constant.)  Although all our Rashomon set samplers  are forced to provide non-zero probability mass on the Rashomon set due to the maximality condition, it is unclear how to obtain a similar guarantee as Theorem~\ref{thm:hitting} for our $(\epsilon,\delta)$-DP sampler. We leave it as an open question.

\subsection{Excess Empirical Risk Guarantees}
\label{sec:empUtil}

The excess empirical risk guarantee follows from Theorem 3.2 in \cite{BST14}:

\begin{thm}
\label{thm:expConv-rashomon-util}
Assume each loss function is convex. Then sampling $\privT$ from the Gibbs distribution \[\exp(-\beta \max\{\calL(\theta; \dataset),\min\limits_{\theta^*\in\calC} \calL(\theta^*; \dataset) + \threshold\})\] is a $(0, \threshold, p/\beta)$-Rashomon sampler. 
\end{thm}

Using Theorem~\ref{thm:expConv-rashomon-util}, to get the best risk bound it suffices to find the largest value of $\beta$ that preserves DP under given assumptions on the loss function.  Theorem~\ref{thm:expMech-rashomon-approx} suggest that when $\threshold=0$, the setting of $\beta$ that is required for $(\epsilon,\delta)$-DP is  independent of the smoothness parameter $M$, and, hence can also be applied to non-smooth functions. Combining Theorems~\ref{thm:expMech-rashomon},~\ref{thm:expMech-rashomon-approx}, and~\ref{thm:expConv-rashomon-util} we get the existence of the following three Rashomon samplers: 

\begin{itemize}
    \item $\left(0, \threshold,\widetilde{O}\left(p\cdot\frac{L \ltwo{\calC}}{\epsilon n}\right)\right)$-sampler for all $\threshold$ and is $\epsilon$-DP.
    \item $\left(0, \threshold,\widetilde{O}\left(p\threshold\cdot\frac{M \log(L\ltwo{\calC}/\delta^2)\log(1/\delta)}{m\epsilon^2}\right)\right)$-sampler when $\threshold\in\left(\frac{2L^2}{Mn^2},\frac{L\ltwo{\calC}}{2}\right]$ and is $(\epsilon,\delta)$-DP.
    \item $\left(0,\threshold,\widetilde{O}\left(p\cdot\frac{L^2 \log(L\ltwo{\calC}/\delta^2)\log(1/\delta)}{m\epsilon^2n^2}\right)\right)$-sampler when $\threshold \leq \frac{2L^2}{Mn^2}$ and is $(\epsilon,\delta)$-DP. 
\end{itemize}

In the $(\epsilon,\delta)$-DP setting if instead of sampling from the Gibbs distribution, we operate with the LD, then in the Rashomon sampler we set $\lambda=\delta$ instead of $\lambda=0$ as done until now. The loss guarantee worsens by at most $O(\delta L^2 / m)$ if we use LD instead of the Gibbs sampler, so the earlier bounds remain unchanged for $\delta = O(1/\epsilon^2 n^2)$.

Ignoring the polylogarithmic terms, if $\threshold=\widetilde{\Omega}\left(\frac{\epsilon L\ltwo{\calC}}{(M/m)}\cdot\frac{1}{n}\right)$, then our privacy analysis of the $\epsilon$-DP Rashomon sampler gives a better bound on $\beta$ than our analysis of the $(\epsilon,\delta)$-DP sampler, and vice versa when $\threshold=\widetilde{o}\left(\frac{\epsilon L\ltwo{\calC}}{(M/m)}\cdot\frac{1}{n}\right)$. As previously mentioned, we believe that the weaker bound on $\beta$ for higher values of $\threshold$ in our analysis of the $(\epsilon, \delta)$-DP sampler is fundamental to the problem. 

\subsection{Excess Population Risk Guarantees}
\label{sec:roshPOP}

For the same Rashomon samplers, we can derive bounds on their population risks under strong convexity. We give a bound on uniform stability (see Definition~\ref{def:unifStab}) of the Gibbs distribution:

\begin{thm}\label{thm:rashomon-uniform}
Suppose we have $\calC$ and $\ell(\cdot; d)\in\mathbb{R}^+$ such that, for all $d$, $\ell(\cdot; d)$ is $m$-strongly convex, and is $L$-Lipschitz within $\calC$. Then sampling $\privT$ from the Gibbs distribution proportional to \[\exp(-\beta \max\{\calL(\theta; \dataset), \min\limits_{\theta^*\in\calC}\calL(\theta^*;D) + \threshold\})\] satisfies $\left(4 L \sqrt{\frac{2\threshold}{m}} + \frac{2 L^2}{m n}\right)$-uniform stability.
\end{thm}

\begin{proof}
Let $\theta_t, \theta'_t$ be the solutions to running \eqref{eq:PLD} from the same initialization on $\calL(\theta; D)$ and $\calL(\theta; D')$, respectively. Similarly, let $\tilde{\theta}_t $ be the solutions to running \cref{eq:PLD} on $\max \left\{\threshold + \min\limits_{\theta^*\in\calC}\calL(\theta^*; D), \calL(\theta; D) \right\}$ and $\tilde{\theta}'_t$ be the solutions to running \cref{eq:PLD} on $\max \left\{\threshold + \min\limits_{\theta^*\in\calC}\calL(\theta^*; D'), \calL(\theta; D') \right\}$. Let $W_\infty$ denote the the $\infty$-Wasserstein distance. Then 
we will show that for all $t$, $W_\infty(\tilde{\theta}_t, \tilde{\theta}'_t) \leq 4 \sqrt{\frac{2 \threshold}{m}} + \frac{2L}{mn}$. 
Taking the limit as $t$ goes to infinity, we get a $\infty$-Wasserstein distance bound between the Gibbs samplers. The theorem now follows by using $L$-Lipschitzness.

We now show the desired bound on $W_\infty(\tilde{\theta}_t, \tilde{\theta}'_t)$. By the triangle inequality, we have 
$$W_\infty(\widetilde \theta_t, \widetilde \theta_t') \leq W_\infty(\tilde{\theta}_t, \theta_t) + W_\infty(\tilde{\theta}'_t, \theta'_t) + W_\infty(\theta_t, \theta'_t).$$ 

So, it suffices to prove that for all $t$, 
\[W_\infty(\tilde{\theta}_t, \theta_t) \leq 2 \sqrt{\frac{2 \threshold}{m}},\quad W_\infty(\tilde{\theta}'_t, \theta'_t) \leq 2 \sqrt{\frac{2 \threshold}{m}}, \quad \text{and} \quad  W_\infty(\theta_t, \theta'_t) \leq \frac{2L}{mn}.\]

We first prove the desired bound on $W_\infty(\tilde{\theta}_t, \theta_t)$, the bound on $W_\infty(\tilde{\theta}'_t, \theta'_t)$ follows identically. 

\mypar{Bounding $W_\infty(\tilde{\theta}_t, \theta_t)$}
We show that conditioned on the value of a \textit{shared} Brownian motion $W_t$, $\ltwo{\theta_t - \tilde{\theta}_t} \leq 2 \sqrt{\frac{2 \threshold}{m}}$ holds deterministically, which implies the desired Wasserstein distance bound. We split $[0, \infty)$ into intervals of maximal length for which one of the following three holds throughout each interval: 
\begin{enumerate}
    \item [(i)] $\tilde{\theta}_t \notin \calG$,
    \item [(ii)] $\tilde{\theta}_t \in \calG, \theta_t \in \calG$, and 
    \item [(iii)]  $\tilde{\theta}_t \in \calG, \theta_t \notin \calG$.
\end{enumerate}
     
In case (ii), by strong convexity throughout the interval we have $$\ltwo{\theta_t - \tilde{\theta}_t} \leq \ltwo{\calG} \leq 2 \sqrt{\frac{2 \threshold}{m}}.$$ 

So it suffices to show that in cases (i) and (iii), $\ltwo{\theta_t - \tilde{\theta}_t}$ is non-increasing throughout the interval. Then the desired Wasserstein distance bound holds by induction, since initially $\theta_0 = \tilde{\theta}_0$.

In case (i), since $\calL$ is convex, projection is contractive (which implies $\frac{d \ltwo{\theta_t - \tilde{\theta}_t}^2}{dt}$ can only increase if we use \eqref{eq:LD} instead of \eqref{eq:PLD}), and that we are using a shared Brownian motion $W_t$, we have 

\begin{align*}
\frac{1}{2} \cdot \frac{d}{dt} \ltwo{\theta_t - \tilde{\theta}_t}^2 &\leq \left\langle \frac{d \theta_t}{dt} - \frac{d \tilde{\theta}_t}{dt}, \theta_t - \tilde{\theta}_t \right\rangle = \beta \langle -(\nabla \calL(\theta_t; D) - \nabla \calL(\theta_t'; D)), \theta_t - \tilde{\theta}_t  \rangle \leq 0 .\end{align*}

Similarly, in case (iii), by convexity and since $\tilde{\theta}_t$ is in the Rashomon set and $\theta_t$ is not (or $\calL(\tilde{\theta}_t;D) \leq \calL({\theta}_t;D)$), we have 

\begin{align*}
\frac{1}{2} \cdot  \frac{d \ltwo{\theta_t - \tilde{\theta}_t}^2}{dt} & \leq \left\langle \frac{d \theta_t}{dt} - \frac{d \tilde{\theta}_t}{dt}, \theta_t - \tilde{\theta}_t \right\rangle = \beta \left\langle \nabla \calL(\theta_t; D), \tilde{\theta}_t - \theta_t  \right\rangle \leq \beta (\calL(\tilde{\theta}_t; D) - \calL(\theta_t; D)) \leq 0.\end{align*}

\mypar{Bounding $W_\infty(\theta_t, \theta'_t)$} We again show that conditioned on the value of a \textit{shared} Brownian motion $W_t$, $\ltwo{\theta_t - \theta'_t} \leq \frac{2L}{mn}$ holds deterministically. We again use the fact that projection is contractive, so we can consider using \eqref{eq:LD} instead of \eqref{eq:PLD}. Then by $m$-strong convexity and $L$-Lipschitzness:

\begin{align}
\frac{1}{2} \cdot \frac{d \ltwo{\theta_t - \theta'_t}^2}{dt} &\leq \left\langle \frac{d \theta_t}{dt} - \frac{d \theta'_t}{dt}, \theta_t - \theta'_t \right\rangle = - \beta \langle \nabla \calL(\theta_t; D) - \nabla(\calL(\theta_t', D'), \theta_t - \theta'_t \rangle \nonumber \\
&= - \beta \left(\langle \nabla \calL(\theta_t; D) - \nabla(\calL(\theta_t', D), \theta_t - \theta'_t \rangle + \langle \nabla \calL(\theta_t'; D) - \nabla(\calL(\theta_t', D'), \theta_t - \theta'_t \rangle \right) \nonumber \\
&\leq \beta {\left(-m \ltwo{\theta_t - \theta_t'}^2 + \frac{2L}{n} \ltwo{\theta_t - \theta_t'}\right)}. \label{eq:distancebound}
\end{align}

Then, if $\ltwo{\theta_t - \theta_t'} > \frac{2L}{mn}$, then we have $(\ref{eq:distancebound})<0$. That is, we have $\frac{d \ltwo{\theta_t - \theta'_t}^2}{dt} <0$. This implies the desired Wasserstein distance bound completing the proof of Theorem~\ref{thm:rashomon-uniform}.
\end{proof}

\begin{remark}
We note that in the second part of the proof, one can instead take the $L^2 / mn$-uniform stability of (noisy) gradient descent on strongly convex losses proved in \cite{HardtRS16}, and then take the limit as $\eta \rightarrow 0, T \rightarrow \infty$ to conclude the same uniform stability bound holds for the Gibbs distribution, rather than appeal to the derivative of the distance between $\theta_t$ and $\theta_t'$.
\end{remark}

It is straightforward to see that a $(0, \threshold, \gamma)$-Rashomon sampler has expected excess empirical risk at most $\threshold + \gamma$. Then, an algorithm's excess population risk is at most its excess empirical risk plus its uniform stability (see Lemma~\ref{lem:emptopop-us}), giving us the following result:

\vspace{0mm}
\begin{thm}\label{thm:rashomon-population}
Assume that each of the individual loss function $\ell(\theta;\cdot)\in\mathbb{R}^{+}$ is $m$-strongly convex and $L$-Lipschitz within the constraint set $\calC$. Then

\begin{itemize}
    \item There exists an $\epsilon$-DP rashomon sampler with threshold $\threshold$ and excess population risk \[O\left(\frac{L\ltwo{\calC}p}{\epsilon n} + \threshold + L \sqrt{\frac{\threshold}{m}} + \frac{L^2}{m n}\right).\]
    \vspace{-3mm}
    \item There exists an $(\epsilon, \delta)$-DP rashomon sampler with threshold $\threshold$ and excess population risk (assuming $M$-smoothness)
    \[\widetilde{O}\left(p \max\left\{\threshold, \frac{L^2}{Mn^2}\right\}\cdot\frac{M \log(L\ltwo{\calC}/\delta^2)\log(1/\delta)}{m\epsilon^2}+\threshold + L \sqrt{\frac{\threshold}{m}} + \frac{L^2}{m n}\right).\]
\end{itemize}
\end{thm}

\subsection{Interlude: Proof of Privacy for the Gibbs Sampler via Isoperimetry}\label{app:simple}
\begin{thm}
\label{thm:simple}
Suppose $\ell$ is $m$-strongly convex, $M$ smooth, and $L$-Lipschitz within $\calC$.
Let \[\calL'(\theta; D) := \max\{\calL(\theta; D), \min\limits_{\theta^* \in \calC} \calL(\theta^*; D) + \threshold\}.\] Then for
$\beta = O\left(\frac{\min\{\epsilon^2, \epsilon\} m}{\max\{L^2 / n^2, M \threshold \} \log(1/\delta)}\right),$
sampling from the distribution with density proportional to $\exp(-\beta \calL'(\theta; D)) \cdot \mathds{1}(\theta \in \calC)$ satisfies $(\epsilon, \delta)$-DP.
\label{thm:GLLRashomon}
\end{thm}

\begin{proof}
    Let $P$ be the probability measure for the distribution induced by $D$ and $Q$ be the same for $D'$. Let $g(\theta) = \log(P(\theta)/Q(\theta))$. Then $g$ is a $6 \beta \max\{L / n, \sqrt{M \threshold}\}$-Lipschitz function. It suffices to show that
    $\Pr_{\theta \sim P}\left[ g(\theta) > \epsilon \right] \leq \delta$.
    We first bound $E_{\theta \sim P}\left[g(\theta)\right]$ as a function of $\beta$. This is simply the KL divergence between $P, Q$, which must be non-negative. By symmetry, $E_{\theta \sim Q}\left[g(\theta)\right]$ is non-positive. In addition, we can bound the difference between these two expressions: in the proof of Theorem~\ref{thm:rashomon-uniform}, we showed that the $\infty$-Wasserstein distance between $P$ and $Q$ is at most $4\sqrt{\frac{2\threshold}{m}} + \frac{2L}{mn}$. Then, by Lipschitzness of $g$,
    \begin{align}
    E_{\theta \sim P}\left[g(\theta)\right] &\leq E_{\theta \sim Q}\left[g(\theta)\right] + 6 \beta \max\{L / n, \sqrt{M \threshold}\} \cdot \left(4\sqrt{\frac{2\threshold}{m}} + \frac{2L}{mn}\right)\nonumber\\
    &  \leq 6 \beta \max\{L / n, \sqrt{M \threshold}\} \cdot \left(4\sqrt{\frac{2\threshold}{m}} + \frac{2L}{mn}\right)
    \\
    &\leq 36 \sqrt{2} \beta \max\left\{\frac{L^2}{mn^2}, \sqrt{\frac{M}{m} }\threshold\right\} .\label{eq:privlossexp}
    \end{align}
    
    We next give a high probability bound on $g$ as a function of $\beta$. $P$ satisfies LSI with constant $\beta m \exp(- \beta \threshold)$ by Lemma~\ref{lem:rashomon-lsi}. Then by Proposition 2.3 in \cite{ledoux1999concentration}, plugging in our LSI constant and Lipschitzness bound for $g$ we have:
    
    \begin{equation}\label{eq:privlosstail}
    \Pr_{\theta \sim P}[g(\theta) >  \mathbb{E}_{\theta \sim P}[g(\theta)] + \beta r] \leq \exp\left(-\frac{\beta m \exp(- \beta \threshold) r^2}{18 \max\{L^2 / n^2, M \threshold / 2\}}\right)
    \end{equation}
    
    Setting 
    \[r = \frac{3\sqrt{2} \max\{L/n, \sqrt{M \threshold/2}\}}{\sqrt{\beta m} \exp(-\beta \threshold / 2)}\sqrt{\log(1/\delta)},\] the right hand side in \cref{eq:privlosstail} becomes $\delta$. Setting
    \[\beta \leq \frac{\min\{\epsilon^2, \epsilon\} m}{c \max\{L^2 / n^2, M \threshold\} \log(1/\delta)},\]
    where $c$ is a sufficiently large constant, gives that the upper bound in \eqref{eq:privlossexp} is at most $\epsilon / 2$ and $\beta r \leq \epsilon / 2$. Here, we use that for this choice of $\beta, \beta \threshold \leq 1$. Plugging in to \eqref{eq:privlosstail} we get $\Pr_{\theta \sim P}\left[g(\theta) > \epsilon \right] \leq \delta$ exactly as desired. This completes the proof of Theorem~\ref{thm:simple}.
\end{proof}
\vspace{-5mm}
\subsection{Comparison with Contemporary Work of~\cite{gopi2022private}}
\label{sec:gopietal}

All our results for DP-ERM and DP-SCO, and their relation with Langevin diffusion is contemporary to~\cite{gopi2022private} (which is also formally acknowledged by~\cite{gopi2022private}). Our results specific to Rashomon sets  (i.e., with setting $\threshold>0$) and SGLD are subsequent to their work. In the following, we discuss the technical differences between two works, and highlight the settings in which one might be better over the other.

\paragraph{On the privacy aspect} Since~\cite{gopi2022private} gives Gaussian DP guarantees, which do not translate to $\epsilon$-DP guarantees, we will restrict the discussion specific to $(\epsilon,\delta)$-DP. 
Unlike \cite{gopi2022private}, our privacy guarantee (for LD as well as SGLD) is independent of convexity. This is highly desirable for broader applicability in settings where where the loss function may not be globally convex, but has local convexity properties (e.g., losses emerging from deep learning settings~\citep{izmailov2018averaging,keskar2016large}.)
 Our privacy holds for the entire path of the optimization. In contrast, \cite{gopi2022private} only guarantee privacy for the final model release. In particular, this implies that we can stop and output the model at any point of the optimization trajectory. While this might not yield optimal model, the privacy is never compromised. The ability to release the  path has been used in subsequent works~\citep{shejwalkar2022recycling,rabanser2023training} for uncertainty quantification.

\paragraph{On the proof technique.} 
In Theorem~\ref{thm:GLLRashomon}, we demonstrated that the proof technique of~\cite{gopi2022private} can be extended to obtain privacy for the Gibbs distribution used in Rashomon set sampling problem (i.e., when $\threshold>0$). However, unlike Lemma~\ref{lem:renyifinite}, the proof of Theorem~\ref{thm:GLLRashomon} requires an isoperimetric inequality based on LSI~\citep{ledoux1999concentration}, and bounding the LSI constant for the Rashomon set sampler. 
In addition, while both our work and~\citep{gopi2022private} shows uniform stability property for the Gibbs distribution, our proof is arguably simpler: Theorem~\ref{thm:rashomon-uniform} uses only well-known properties of gradient descent and avoids tools such as the Talagrand transportation inequality. Since the proof techniques of~\cite{gopi2022private} and ours are completely independent, we believe that both the techniques will find adoption in subsequent works on DP optimization. 

\paragraph{On run time} In the DP-SCO setting, the gradient oracle complexity of~\cite{gopi2022private} is better than our SGLD discretization of the Langevin diffusion. In particular, they achieve oracle complexity with logarithmic dependence on $p$ and the total variation distance error $\delta$, whereas we have a dependence $p / \delta^2$ on these two parameters. Furthermore, we require exact gradient oracles, whereas they only require an unbiased function oracle.

\section{Discrete Approximation of the Langevin Diffusion and SGLD}
\label{sec:sgld}
While exactly sampling from the Gibbs distribution may be computationally intractable, under some additional assumptions, a number of papers study polynomial-time algorithms for approximate sampling from the Gibbs distribution with various metric of approximation, such as Renyi divergence~\citep{vempala2019renyi, GaneshT20, erdogdu2021convergence, ChewiLSILD} and  $\infty$-divergence~\citep{BST14,mangoubi2021sampling}.
For sampling from the distribution $\exp(-\beta\calL)$, a popular approximate sampler is the Stochastic Gradient Langevin Dynamics (SGLD).
The SGLD approximates a finite-time solution to \eqref{eq:LD} via the discrete updates:
$ \theta_{t+1} = \theta_t - \eta \beta \nabla \calL(\theta_t; D) + \xi_t,$ where $\xi_t \sim \calN(0, 2 \eta I_p).$ 
This update can be seen as equivalent to \cref{eq:LD} if we use $\nabla \calL(\theta_{\eta \lfloor t/\eta\rfloor}; D)$ instead of $\nabla \calL(\theta_t; D)$, i.e., instead of continuously, update the gradient drift term in \cref{eq:LD} every $\eta$ time.

Many works study SGLD as an approximate sampler, and show that, for sufficiently small $\eta$ and large $T$, $\theta_T$ is an approximate sample from the stationary distribution $\exp(-\beta\calL)$ of \eqref{eq:LD}. For an appropriate definition of approximation, such as total variation distance, privacy of the stationary distribution then implies privacy of SGLD. SGLD also converges in polynomial time for stronger notions of convergence such as the R\'enyi divergence, but since divergences are ``one-directional'' bounds, whereas privacy requires ``bi-directional'' bounds, these results combined with the privacy of the stationary distribution do not necessarily ensure privacy of SGLD. One could also use the result of \cite{altschuler2022resolving}, which shows SGLD is an approximate sampler for the \textit{discrete chain's} stationary distribution. However, one would then need to show a bound on the bias due to the discretization that preserves privacy and utility. 

Instead, we appeal to the privacy of the noisy gradient steps taken since SGLD is just a reparameterization of DP-SGD. 
Using the composition theorem for R\'enyi divergences (Fact~\ref{fact:composition}) and translation from R\'enyi divergence bounds to $(\epsilon, \delta)$-DP (Fact~\ref{fact:renyitoapx}), we get the following.

\begin{lem}\label{lem:sgld-privacy}
If $$\ltwo{\nabla \calL(\theta; D) - \nabla \calL(\theta; D')} \leq \Delta \quad \text{and} \quad \beta \leq \frac{2 \epsilon}{\Delta \sqrt{T \eta \log(1/\delta)}},$$ then
outputting $\theta_T$ sampled from SGLD is $(\epsilon, \delta)$-DP.
\end{lem}
\vspace{-3mm}

This statement matches Theorem~\ref{lem:renyifinite} as $\eta \rightarrow 0$. For the utility guarantee of SGLD as an approximate sampler, we use results from \cite{ChewiLSILD}, though these require smoothness. The thresholded loss does not satisfy smoothness for $\threshold > 0$. In Appendix~\ref{app:kernel}, we show that using standard smoothing techniques, one can still get a discrete approximate sampler for the Gibbs distribution of a smoothed version of the thresholded loss, which implies the following sampler:

\begin{thm}\label{thm:sgld-sampler-tvd} 
Suppose $\ltwo{\nabla \calL(\theta; D) - \nabla \calL(\theta; D')} \leq \Delta$, and that each individual loss function $\ell$ is $m$-strongly convex and $M$-smooth. Let $0 \leq \lambda \leq \sqrt{\frac{\threshold}{Mp}}$ and $Q$ be the (unconstrained) Gibbs distribution of $\beta \widetilde{\calL}'$, where $$\widetilde{\calL}' := \mathbb{E}_{\xi \sim N(0, \lambda^2 \mathbb{I}_p)} \left[\min\{\calL(\theta + \xi; D), \min\limits_{\theta^* \in \calC}\calL(\theta^*; D) + \threshold\}\right].$$ 

Then for 
\[\beta = \widetilde{O}\left(\frac{\epsilon^2 m}{\max\{\Delta^2, M \threshold\}  \log^2(1/\delta)}\right), T= \widetilde{\Omega}\left(\frac{p (M^2 + \frac{M \threshold}{\lambda^2}) \max\left\{\Delta^4 , M^2 \threshold^2\right\}}{\epsilon^4 m^4 \delta^2} \right),\]
 there exists an algorithm using $T$ iterations  
that satisfies $(\epsilon, \delta)$-DP and returns a sample from a distribution whose TVD from $Q$ is $\delta$. Furthermore, the privacy guarantee holds even without convexity. In particular, if $\threshold = 0$, for $\lambda = 0$ the bounds are 
$\beta = \widetilde{O}\left(\frac{\epsilon^2 m}{\Delta^2 \log^2(1/\delta)}\right),
T= \widetilde{\Omega}\left(\frac{p M^2 \Delta^4}{\epsilon^4 m^4 \delta^2} \right).$
\end{thm}

In Theorem~\ref{thm:sgld-sampler-tvd}, we do not know if one can exactly compute the values of $\nabla \widetilde{\calL}'(\theta)$, but one can make approximate oracle calls to the gradients of $\widetilde{\calL}'$ via Monte Carlo sampling. That is, one samples $\xi_1, \ldots, \xi_k$, and then uses $\frac{1}{k} \sum_{i \in [k]} \nabla (\min\{\calL(\theta + \xi; D), \min\limits_{\theta^* \in \calC}\calL(\theta^*; D) + \threshold\})$ as an estimate of $\nabla \widetilde{\calL}'(\theta)$. It is easy to check that using Monte Carlo sampling instead of exact gradients does not affect our (worst-case) privacy analysis. Of course, in practice, the assumptions in Theorem~\ref{thm:sgld-sampler-tvd} may not hold anyway, and using the convolved loss function and using the unperturbed loss function will lead to similar outcomes for small $\lambda$, and also have the same privacy guarantees.
\section{Optimal DP-ERM/SCO Bounds from Rashomon Samplers}
\label{sec:DPERMSCO}

In this section we show that just using the Rashomon sampler with $\threshold = 0$ as a primitive is enough to derive near-optimal bounds for DP-ERM/SCO in all settings. Our results are summarized in Table~\ref{tb:resultsERMSCO}.

\begin{table}[t]
\begin{center}
\begin{tabular}{|c|c||c||c|}
\hline
 & Assumption & $\epsilon$-DP & $(\epsilon,\delta)$-DP\\
 \cline{2-4}
\hline
\hline
\multirow{2}{*}{DP-ERM} &
Convex &   $\frac{Lp}{\epsilon n}$    &    $\tilde{O}_\delta\left(\frac{L\sqrt{p}}{\epsilon n}\right)$         \\
\cline{2-4}
& SC &  {\color{blue} $\frac{L^2(p^2 + p \log n)}{m\epsilon^2 n^2}$ }       &   $\tilde{O}_\delta\left(\frac{L^2p}{m\epsilon^2 n^2}\right)$          \\
\cline{2-4}
\hline
\hline 
\multirow{2}{*}{DP-SCO} & Convex &   {\color{blue}$\frac{L}{\sqrt n}+\frac{Lp}{\epsilon n}$}   &    $\frac{L}{\sqrt n}+\tilde{O}_\delta\left(\frac{L\sqrt{p}}{\epsilon n}\right)$        \\
\cline{2-4}
& SC &  {\color{blue} $ \frac{L^2}{mn}+\frac{L^2p^2 \log n}{m \epsilon^2 n^2} $ }      &     
$\frac{L^2}{mn}+\tilde{O}_\delta\left(\frac{L^2p}{m\epsilon^2 n^2}\right) $            \\
\hline
\end{tabular}
\end{center}
\vspace{-5mm}
\caption{Summary of results that can be (re-)derived using the Rashomon sampler. The bounds marked in {\color{blue} blue} were not known even via different algorithms, and all other bounds are tight. Convex: Convex bounded Lipschitz losses, SC: Convex with $\nabla^2 \ell(\theta;\cdot)\succcurlyeq m \mathbb{I}$.} \label{tb:resultsERMSCO}
\end{table}

\subsection{Pure DP, Convex Losses}

For $L$-Lipschitz convex losses on $\calC$, the best possible excess empirical risk under $\epsilon$-DP is $O\left(\frac{L\ltwo{\calC}p}{\epsilon n}\right)$. This bound is achieved by the exponential mechanism as shown in \cite{BST14}, which is exactly what the Gibbs distribution is for $\threshold = 0$.

The best possible excess population risk under $\epsilon$-DP is $O\left(\frac{L\ltwo{\calC}p}{\epsilon n} + \frac{L \ltwo{\calC}}{\sqrt{n}}\right)$. We can achieve this via Theorem~\ref{thm:rashomon-population} by setting $\threshold = 0$ and adding the regularizer $\frac{L}{2 \ltwo{\calC} \sqrt{n}} \ltwo{\theta - \theta_0}^2$, where $\theta_0$ is an arbitrary point in $\calC$, to the loss function to give the algorithm uniform stability. The excess population risk of the Gibbs distribution with respect to the regularized loss is $O(\frac{L \ltwo{\calC} p}{\epsilon n} + \frac{L \ltwo{\calC}}{\sqrt{n}}))$ by Theorem~\ref{thm:rashomon-population}, and the regularized and unregularized loss differ by at most $O(\frac{L \ltwo{\calC}}{\sqrt{n}})$ everywhere in $\calC$.  Putting it all together, we get the following:

\begin{thm}
\label{thm:convexpureDPSCO}
For convex, $L$-Lipschitz losses over $\calC$, there exists an $\epsilon$-DP algorithm 
with excess population risk $O\left(\frac{L\ltwo{\calC}p}{\epsilon n} + \frac{L \ltwo{\calC}}{\sqrt{n}}\right)$.\label{thm:expConvSCO}
\end{thm}

\subsection{Pure DP and Strongly Convex Losses}

For $L$-Lipschitz, $m$-strongly convex losses on $\calC$, the best possible excess empirical risk under $\epsilon$-DP is $O\left(\frac{L^2 p^2}{\epsilon^2 n^2 m}\right)$. Unfortunately, the guarantee given by Theorem~\ref{thm:expConv-rashomon-util} is worse by a quadratic factor. In \cite{BST14}, the optimal excess empirical risk is achieved (up to log factors) by first choosing a smaller ball using the Laplace mechanism, and then running the Gibbs sampler on this smaller ball. We show the best-known bound can be achieved using the Gibbs sampler only as a primitive, which does not allow us to use the Laplace mechanism. We propose the iterated exponential mechanism, given as Algorithm~\ref{alg:iem}, with the following guarantee:

\begin{algorithm}
\caption{Iterated Exponential Mechanism}
\begin{algorithmic}[1]
    \Require {Loss function $\calL$, constraint set $\calC_0$, Lipschitz constant $L$, strong convexity parameter $m$, number of iterations $k$, privacy parameter sequence $\{\epsilon_i\}_{i=1}^k$, $\mathsf{flag}$ and data set $\dataset$ of $n$ samples.}

    \State {\textbf{for} $i = 1$ to $k$ \textbf{do}} 
        \State \hspace{5mm} {Sample $\theta_i$ from $\calC_{i-1}$ with probability proportional to $\exp( - \frac{\epsilon_i n}{2 L \ltwo{\calC_{i-1}}} \calL(\theta; \dataset))$.}
        \State \hspace{5mm}\textbf{If} $\mathsf{flag}=1$ \textbf{then}
        
        \State \hspace{10mm}{$\calC_i \leftarrow \set{\theta \in \calC_{i-1} : \ltwo{\theta - \theta_i} \leq \sqrt{\frac{c L (p + 3 \log n) \ltwo{\calC_{i-1}}}{m \epsilon_i n}} } $.}
        \State \hspace{5mm} \textbf{else}
        \State \hspace{10mm} 
        {$\calC_i \leftarrow \set{\theta \in \calC_{i-1} : \ltwo{\theta - \theta_i} \leq \sqrt{\frac{cL(p + 3 \log n) \ltwo{\calC_{i-1}}}{m\epsilon_i n}} + \frac{cL\sqrt{\log n}}{m \sqrt{n}} } $.}
    \State \textbf{end}
\State{\Return{$\theta_k$}}
\end{algorithmic}
\label{alg:iem}
\end{algorithm}

\begin{thm}
\label{thm:expStrongConv}
Assume each of the individual loss function in $\calL(\theta;D)$ is $L$-Lipschitz within the constraint set $\calC_0$. For any $\epsilon$, if we instantiate  Algorithm~\ref{alg:iem} with $k = 1 + \lceil \log \log (\frac{ \epsilon n}{(p + \log n)}) \rceil$ and $\epsilon_i = \epsilon /2^{k-i+1}$, then Algorithm~\ref{alg:iem} is $\epsilon$-differentially private. 

Additionally, if the loss function $\calL(\theta;D)$ is $m$-strongly convex and the constraint set $\calC_0$ is convex and $\mathsf{flag=1}$, then over the randomness of the algorithm, the output $\theta_k$ of Algorithm~\ref{alg:iem} has excess empirical risk:
\[O\left(\frac{L^2 (p^2 + p \log n)}{\epsilon^2 n^2 m}\right).\]
\end{thm}

The theorem follows by solving a recurrence for $\ltwo{\calC_i}$ to bound the diameter of the final set $\calC_{k-1}$. Then we show that the minimizer over $\calC_0$ is also in $\calC_{k-1}$ with high probability. Therefore, the analysis of the exponential mechanism on $\calC_{k-1}$ gives the theorem. We note that in addition to only using the Gibbs sampler as a primitive, we improve the $p^2 \log n$ in \cite{BST14} result to $p^2 + p \log n$.
To bound the excess loss, we first need the following lemma, which shows that with high probability we choose a series of $\calC_i$ that all contain the optimal $\theta$ for $\calC_0$. 
It follows from a tail bound on the excess loss of the exponential mechanism, and using $m$-strong convexity to translate this into a distance bound.

\begin{lem}\label{lem:optinball}
Let $\calL(\cdot;D)$ be an $m$-strongly convex function. Suppose we sample $\theta$ from the convex constraint set $\calC$ with probability proportional to $\exp\left(-\frac{\epsilon n}{2 L \ltwo{\calC}} \calL(\theta; \dataset)\right)$. Let $\theta^* = \argmin_{\theta \in \calC} \calL(\theta; \dataset)$. Then for any $t \geq 0$ and for some sufficiently large constant $c$ we have  

\begin{align*}\Pr \sparen{ \ltwo{\theta - \theta^*} \leq \sqrt{\frac{cL(p+t)\ltwo{\calC}}{m \epsilon n}}} \geq 1-2^{-t}.\end{align*}
\end{lem}

\begin{proof}
By e.g. the proof of \cite[Theorem III.2]{BST14}, we know that for some sufficiently large constant $c$:

\begin{align}
\Pr\left[\calL(\theta; \dataset) - \calL(\theta^*; \dataset) \leq \frac{cL\ltwo{\calC}}{2\epsilon n}(p + t)\right] \geq 1 - 2^{-t}.
\label{eq:bst14_exp}
\end{align}

We now show that the claim holds conditioned on this event. By optimality of $\theta^*$ and convexity of $\calC$, we know 
\begin{align}
\langle \grad \calL(\theta^*; \dataset), \theta - \theta^* \rangle \geq 0.
\label{eq:optimality_exp}
\end{align}

So, by $m$-strong convexity, we have

\begin{align*}
\frac{cL\ltwo{\calC}}{2\epsilon n} (p+t) &\underset{(\ref{eq:bst14_exp})}{\geq} \calL(\theta; \dataset) - \calL(\theta^*; \dataset) \\
&\geq \langle \grad \calL(\theta^*; \dataset), \theta - \theta^* \rangle + \frac{m}{2}\ltwo{\theta - \theta^*}^2 \\ &\underset{(\ref{eq:optimality_exp})}{\geq} \frac{m}{2}\ltwo{\theta - \theta^*}^2.\end{align*}

Rearranging gives the claim in Lemma~\ref{lem:optinball}.
\end{proof} 

Given Lemma~\ref{lem:optinball}, we can now prove Theorem~\ref{thm:expStrongConv}.

\begin{proof}[Proof of Theorem~\ref{thm:expStrongConv}]
The privacy guarantee is immediate from the privacy guarantee of the exponential mechanism, composition, and the fact that for this choice of $\epsilon_i, k$, we have $\sum_{i=1}^k \epsilon_i < \epsilon$.

Setting $t = 3 \log n$ in Lemma~\ref{lem:optinball}, in iteration $i$, letting $\theta^*_i =  \argmin_{\theta \in \calC_{i-1}} \calL(\theta; \dataset)$, we have that with probability $1-2^{-t} = 1 - \frac{1}{n^3}$, $\theta^*_i \in \calC_i$, and thus $\theta^*_i = \theta^*_{i+1}$. Then by a union bound, we have that with probability $1 - \frac{k}{n^3} \geq 1 - \frac{\log \log (\epsilon n)}{n^3}$, $\theta^*_1 \in \calC_{k-1}$ (equivalently, $\theta^*_1 = \theta^*_2 = \ldots = \theta^*_k$). When this event fails to happen, our excess loss is at most $L\ltwo{\calC_0}$, and in turn the contribution of this event failing to hold to the expected excess loss is $O\left(\frac{L \ltwo{\calC_0} \log \log (\epsilon n)}{n^3}\right)$, which is asymptotically less than our desired excess loss bound. So it suffices to provide the desired expected excess loss bound conditioned on this event. By the analysis of the exponential mechanism, conditioned on this event, we have that 

\begin{align}
 \mathds{E}_{\theta_k}\left[\calL(\theta_k; \dataset)\right] - \calL(\theta_1^*; \dataset) = O\left(\frac{Lp \ltwo{\calC_{k-1}}}{\epsilon_k n} \right) = O\left(\frac{Lp \ltwo{\calC_{k-1}}}{\epsilon n}\right).
 \label{eq:purestronglyconvloss}
\end{align}

Note that $\calL(\theta_1^*; \dataset) = \min_{\theta \in \calC_0} \calL(\theta; \dataset)$ by definition, so it now suffices to bound $\ltwo{\calC_{k-1}}$ by $O\left(\frac{L(p + \log n)}{m \epsilon n}\right)$. To do this, we have the recurrence relation:

\begin{align*}\ltwo{\calC_i} \leq 2 \sqrt{\frac{c L (p + 3 \log n) \ltwo{\calC_{i-1}}}{m \epsilon_i n}}.\end{align*}

Solving the recurrence relation for $\calC_{k-1}$, we get:

\begin{align}
\begin{split}
    \ltwo{\calC_{k-1}} 
    &\leq \left(\frac{4 c L (p + 3 \log n) }{m  n}\right)^{1-2^{-(k-1)}} \cdot (\ltwo{\calC_0})^{2^{-(k-1)}} \cdot \prod_{i=1}^{k-1} \epsilon_i^{-2^{-(k-i)}} \\
    & =\left(\frac{4 c L (p + 3 \log n) }{m  \epsilon n}\right)^{1-2^{-(k-1)}} \cdot (\ltwo{\calC_0})^{2^{-(k-1)}} \cdot \prod_{i=1}^{k-1} (2^{(k-i+1)})^{2^{-(k-i)}}.
\end{split}
\label{eq:c_k_1}
\end{align}

We claim the following: 
\begin{align}
    \ltwo{\calC_0} \leq \frac{2L}m. 
    \label{eq:calC_0}
\end{align}

Let $\theta_{\mathsf{global}}$ be the minimizer of $\calL(\theta; \dataset)$ over all of $\mathds{R}^p$. By triangle inequality, there exists a point $\theta$ in $\calC_0$ which is at distance at least $\ltwo{\calC_0}/2$ far from $\theta_{\mathsf{global}}$. By $m$-strong convexity, this implies that the gradient at $\theta$ has $\ell_2$-norm at least $m \ltwo{\calC_0}/2$. Now, by Lipschitzness over $\calC_0$, we know that the gradient at $\theta$ has $\ell_2$-norm at most $L$. This gives us \cref{eq:calC_0}. 

Using \cref{eq:calC_0}, we can simplify \cref{eq:c_k_1} to

\begin{align*}\ltwo{\calC_{k-1}} \leq \frac{2L}{m} \cdot \left(\frac{2c (p + 3 \log n) }{ \epsilon n}\right)^{1-2^{-(k-1)}} \cdot \prod_{i=1}^{k-1} (2^{(k-i+1)})^{2^{-(k-i)}}.\end{align*}

We have:

\begin{align*}\log_2 \left(\prod_{i=1}^{k-1} (2^{(k-i+1)})^{2^{-(k-i)}}\right) = \sum_{i=1}^{k-1} (k-i+1)2^{-(k-i)} \leq \sum_{j=1}^{\infty} (j+1)2^{-j} = 3.\end{align*}

In other words, $\prod_{i=1}^{k-1} (2^{(k-i+1)})^{2^{-(k-i)}}$ is at most $8$, regardless of the value of $k$. Now, using the fact that $m^{1/\log m} = O(1)$ is a constant, our final upper bound on $\ltwo{\calC_{k-1}}$ is:

\begin{align*}\ltwo{\calC_{k-1}} = O\left( \frac{L}{m} \cdot \left(\frac{(p + \log n) }{ \epsilon n}\right)^{1-2^{-(k-1)}} \right) = O\left(\frac{L(p + \log n) }{m \epsilon n}\right).\end{align*}
Plugging in \cref{eq:purestronglyconvloss} gives us \Cref{thm:expStrongConv}.
\end{proof}

The best possible population risk bound is $O(\frac{L^2 p^2}{\epsilon^2 n^2 m} + \frac{L^2}{m n})$.
In order for Algorithm~\ref{alg:iem} to achieve this bound (up to log factors) we make a slight modification: we choose the radius of each ball defined by the algorithm such that the population minimizer, rather than the empirical minimizer, is in $\calC_{k-1}$ with high probability. Then, we can apply uniform stability of the Gibbs sampler on strongly convex losses to the exponential mechanism run on $\calC_{k-1}$ to get the following DP-SCO bound:

\begin{thm}
\label{thm:stronglyconvexpureDPSCO}
Let $\theta_k$ be the output of Algorithm~\ref{alg:iem} 
when $\mathsf{flag}=0$. 
Then $\theta_k$ has excess population risk
\[O\left(\frac{L^2 p^2 \log n}{m \epsilon^2 n^2} + \frac{L^2}{m n}\right).\]
\label{thm:strongERMpureDP}
\end{thm}
We first show that the empirical minimizer is close to the population minimizer with high probability:

\begin{lem}\label{lem:empnearpop}
Let $\ell$ be a $m$-strongly convex function and $\calC$ be a convex set such that for any $d, \theta$, $$\ltwo{\grad \ell(\theta; d) - \mathbb{E}_{d \sim \dist}\left[\grad \ell(\theta; d)\right]} \leq \Delta,$$ and let $\theta^*:= \argmin_{\theta \in \calC} \mathbb{E}_{d \sim \dist}\left[\ell(\theta; d)\right]$ and $\theta^{\mathsf{emp}} := \argmin_{\theta \in \calC} \ell(\theta; \dataset)$. 
Then for $\dataset \sim \dist^n$, with probability $1-\gamma$, we have:

\[\ltwo{\theta^{\mathsf{emp}} - \theta^*} = O\left(\frac{\Delta}{m} \sqrt{\log(1/\gamma) \over n}\right)\]
\end{lem}

\begin{proof}
Consider a function $\tilde{\ell}$ which has gradient $\grad \tilde{\ell}(\theta) = \grad \ell(\Pi_\calC(\theta)) + m (\theta - \Pi_\calC(\theta))$. For any $\dataset$, the empirical minimizer of $\tilde{\ell}$ over $\mathbb{R}^p$ is equal to 
$$\tilde{\theta}^{\mathsf{emp}} := \theta^{\mathsf{emp}} - \frac{1}{m} \cdot \grad \ell(\theta^{\mathsf{emp}}; \dataset),$$ 
and the population minimizer of $\mathbb{E}_{d \sim \dist}\left[\tilde{\ell}(\theta; d)\right]$ is 
$$\tilde{\theta}^* := \theta^* - \frac{1}{m} \cdot \mathbb{E}_{d \sim \dist}\left[\grad \ell(\theta^*; d)\right].$$ 

By optimality of $\theta^{\mathsf{emp}}, \theta^*$ and convexity of $\calC$, 
$$\langle\grad \ell(\theta^{\mathsf{emp}}; \dataset), \theta^* - \theta^{\mathsf{emp}} \rangle \geq 0 \quad \text{and} \quad \langle \mathbb{E}_{d \sim \dist}\left[\grad \ell(\theta^*; \dataset)\right], \theta^{\mathsf{emp}}-\theta^* \rangle \geq 0,$$ 

This implies that $\ltwo{\tilde{\theta}^{\mathsf{emp}} - \tilde{\theta}^*} \geq \ltwo{\theta^{\mathsf{emp}} - \theta^*}$. In addition, since projection to convex sets is a non-expansive map, $\tilde{\ell}$ is $m$-strongly convex if $\ell$ is, and for any $d, \theta$ we have 
$$\ltwo{\grad \ell(\theta; d) - \mathbb{E}_{d \sim \dist}\left[\grad \ell(\theta; d)\right]} = \ltwo{\grad \tilde{\ell}(\theta; d) - \mathbb{E}_{d \sim \dist}\left[\grad \tilde{\ell}(\theta; d)\right]} \leq \Delta.$$ 

This holds for any $\dataset$. Therefore, if we prove the lemma for $\tilde{\ell}$ and $\mathbb{R}^p$, then this would imply that the lemma holds for $\ell$ and $\calC$. So it suffices to show the lemma for $\calC = \mathbb{R}^p$.

If $\calC = \mathbb{R}^p$ then $\mathbb{E}_{d \sim \dist}\left[\grad \ell(\theta; d)\right] = \boldzero$.
Now, by the assumptions in the lemma and a vector Azuma inequality \citep{hayes03vectorazuma}, we have $\ltwo{\grad \ell(\theta^*; \dataset)} =  O(\frac{\Delta\sqrt{\log(1/\gamma)}}{\sqrt{n}})$ with probability $1 - \gamma$ over $\dataset$. Furthermore, we know $\grad \ell(\theta^{\mathsf{emp}}; \dataset) = \boldzero$ by strong convexity and since $\calC = \mathbb{R}^p$. Then by strong convexity, we have 
$$\ltwo{\theta^* - \theta^{\mathsf{emp}}} \leq \frac{\ltwo{\grad \ell(\theta^*; \dataset) - \grad \ell(\theta^{\mathsf{emp}}; \dataset)}}{m} = \frac{\ltwo{\grad \ell(\theta^*; \dataset)}}{m} = O\left(\frac{\Delta}{m} \sqrt{\log(1/\gamma) \over n}\right)$$ 
with probability $1-\gamma$ as desired in the statement of Lemma~\ref{lem:empnearpop}.
\end{proof}

Given Lemma~\ref{lem:empnearpop}, if we want to ensure the \textit{population} minimizer rather than empirical minimizer remains in the sets we choose in Algorithm~\ref{alg:iem}, we just need to choose a slightly larger ball. From this modification and uniform stability, we get our DP-SCO bound:

\begin{proof}[Proof of Theorem~\ref{thm:stronglyconvexpureDPSCO}]
Note that by $L$-Lipschitzness of $\ell$ in $\calC$, we have 

\[\ltwo{\grad \ell(\theta; d) - \mathbb{E}_{d \sim \dist}\left[\grad \ell(\theta; d)\right]} \leq 2L.\]

By Lemma~\ref{lem:empnearpop}, Lemma~\ref{lem:optinball},  and a triangle inequality, we have that the population minimizer of $\ell$ in $\calC_i$ is in $\calC_{i+1}$ for each $i$ with probability $1-2/n^3$. Then by a union bound, we have that the population minimizer is in $\calC_{k-1}$. When this event fails to hold, our excess population risk is $O(L\ltwo{\calC_0})$ and so the contribution of this event to the expected excess loss is $O\left(\frac{L\ltwo{\calC_0} \log \log(\epsilon n)}{n^3}\right) = O\left(\frac{L^2 \log \log(\epsilon n)}{mn^3}\right)$, which is asymptotically less than our desired bound. So it suffices to provide the desired expected excess loss bound conditioned on this event. We can bound the radius of $\calC_{k-1}$ similarly to the proof of Theorem~\ref{thm:expStrongConv}, by noting that:

\[\ltwo{\calC_i} \leq 2 \cdot \max\left\{\sqrt{\frac{cL(p + 3 \log n) \ltwo{\calC_{i-1}}}{m\epsilon_i n}}, \frac{cL\sqrt{\log n}}{m \sqrt{n}}\right\}\]

Then, rolling out the recursion, we have similarly to the proof of Theorem~\ref{thm:expStrongConv}:

\[\ltwo{\calC_{k-1}} = O\left(\frac{L(p + \log n)}{m \epsilon n} + \frac{L\sqrt{\log n}}{m \sqrt{n}}\right).\]

Now, combining Theorem~\ref{thm:expMech} and the uniform stability bound of Theorem~\ref{thm:rashomon-uniform} (for $\threshold = 0$), we get that the expected excess population risk of $\theta_k$ compared to the population minimizer over $\calC_{k-1}$ is:

\begin{align*}
    O\left(\frac{Lp\ltwo{\calC_{k-1}}}{\epsilon n} + \frac{L^2}{mn}\right) &= O\left(\frac{L^2}{mn} \cdot \left(\frac{(p^2 + \log^2 n)}{\epsilon^2 n} + \frac{ p \sqrt{\log n}}{\epsilon\sqrt{n}}+ 1 \right)\right) \\
    &= O\left(\frac{L^2 (p^2 \log n + \log^2 n)}{m \epsilon^2 n^2} + \frac{L^2}{mn}\right).
\end{align*}

In the final equality, we use the fact that $\frac{p}{\epsilon} \sqrt{\log n \over n} \leq \max\left\{\frac{p^2 \log n}{\epsilon^2 n}, 1\right\}$.
We conclude by noting that conditioned on the event the population minimizer over $\calC_0$ is contained in $\calC_{k-1}$, $\theta_k$ has this same excess population risk bound compared to the population minimizer over $\calC_0$, completing the proof of Theorem~\ref{thm:stronglyconvexpureDPSCO}.
\end{proof}

\subsection{Approximate DP and Strongly Convex Losses}

The results in Theorem~\ref{thm:expConv-rashomon-util}, \ref{thm:GLLRashomon} and \ref{thm:rashomon-population} with $\threshold = 0$ combined immediately give that the Gibbs sampler achieves the optimal bounds of $O(\frac{L^2 p \log(1/\delta)}{\epsilon^2 n^2 m})$ and $O(\frac{L^2 p \log(1/\delta)}{\epsilon^2 n^2 m} + \frac{L^2}{m n})$ for excess empirical and population risk respectively in this setting. We note that one could also use Theorem~\ref{thm:expMech-rashomon-approx} instead of Theorem~\ref{thm:GLLRashomon} and obtain bounds that are within logarithmic factors of the optimal bounds, with a finite-time object.

\subsection{Approximate DP and Convex Losses}

Similarly to pure-DP, we can adapt our results in the strongly convex setting to the convex setting by adding a regularizer. For a near-optimal empirical guarantee, we use the regularizer $\frac{L \sqrt{p  \log(1/\delta)}}{2 \ltwo{\calC} \epsilon n} \ltwo{\theta - \theta_0}^2$. The regularized and unregularized losses differ by at most $O\left(\frac{L \ltwo{\calC} \sqrt{p  \log(1/\delta)}}{\epsilon n}\right)$ everywhere in $\calC$, and the empirical excess loss bound we get by plugging in the strong convexity parameter $\frac{L \sqrt{p  \log(1/\delta)}}{ \ltwo{\calC} \epsilon n}$ into the bound for strongly convex losses is $O\left(\frac{L \ltwo{\calC} \sqrt{p  \log(1/\delta)}}{\epsilon n}\right)$, giving nearly the optimal bound of $O\left(\frac{L \ltwo{\calC} \sqrt{p  \log(1/\delta)}}{\epsilon n}\right)$.

For population risk, we use the regularizer $\frac{L}{2 \ltwo{\calC}} \cdot \max\left\{\frac{\sqrt{p \log(1/\delta)}}{\epsilon n}, \frac{1}{\sqrt{n}}\right\}  \ltwo{\theta - \theta_0}^2$. By a similar argument, this gives the optimal bound of $O\left(\frac{L \ltwo{\calC} \sqrt{p  \log(1/\delta)}}{\epsilon n} + \frac{L \ltwo{\calC}}{\sqrt{n}}\right)$.
\section{Discussion and Future Directions}
\label{sec:discussion}

In this work we demonstrated the power of Langevin diffusion (LD) by simultaneously obtaining tight guarantees for DP-ERM, DP-SCO, and obtaining the the first private uniform sampling algorithms from Rashomon sets. Furthermore, we demonstrated that, via SGLD, it is possible to maintain the same privacy/utility trade-offs whiling allowing the algorithm to implemented on a finite precision machine. We believe the idea of using a LD to analyze the privacy of the mechanism that samples from the Gibbs distribution, and using a LD to analyze the uniform stability of this mechanism has wider applicability in the DP literature. We leave it for future exploration. Furthermore, the gradient complexity of our SGLD algorithm is inferior to that of~\cite{gopi2022private}. It is an important open question if it is at all possible to close this gap while not relying on the convexity of the loss function as in our case. This would have a significant real world implications where we aim to ensure privacy and also train learning models that are inherently non-convex. Finally, we believe that exploring other applications of Rashomon sets would would facilitate wider adoption of machine learning models that are differentially private.

\section*{Acknowledgements}
We would like to thank Walid Krichene, Dvijotham Krishnamurthy, Ryan McKenna, Sewoong Oh, Adam Smith, and Thomas Steinke for their helpful comments.

\newpage
\bibliographystyle{alpha}
\bibliography{reference}
\newpage

\appendix

\section{Notation and Preliminaries}
\label{sec:back}
\anoteinline{Need to remove the background that is not needed, and possibly add more that is needed.}
\begin{table}[t]
\begin{center}
\begin{tabular}{|c||c|c||c|c|}
\hline
 Notation &   \\
\hline
$D = \set{d_1,\cdots, d_n}$ & data set \\ \hline
$\calD$ & data distribution \\ \hline
$\tau$ & domain set of data  \\ \hline
$\calC \subset \mathbb R^p$ & convex set/parameter space \\ \hline
$\ell$ & loss function  \\ \hline
$\calL$ & empirical loss function  \\ \hline
$\riskerm$ & excess empirical risk \\ \hline
$\riskpop$ & excess population risk \\ \hline
$m$ & strong convexity parameter \\ \hline
$M$ & smoothness parameter \\ \hline
$L$ & Lipschitz constant \\ \hline
$\privT$ & private model output \\ \hline
$\theta^*$ & optimal model \\ \hline
$\theta^{\mathsf{emp}}$ & model for ERM \\ \hline
$\beta$ & inverse temperature \\ \hline
$W_t$ & standard Brownian motion \\ \hline
$R_\alpha(\cdot, \cdot)$ & Renyi divergence of order $\alpha$ \\ \hline
$T$ & time \\ \hline 
$A \succeq 0$ & $A$ is positive semidefinite \\ \hline $A \succeq B $ & $A-B$ is positive semidefinite \\ \hline 
$\mathbb I_p$ & $p \times p$ identity matrix \\ \hline 
$(\epsilon,\delta)$ & Privacy parameters \\ \hline
$\gamma$ & Empirical loss slack \\ \hline 
$\gamma'$ & Population loss slack \\ \hline 
$\threshold$ & Threshold of Rashomon set \\ \hline 
$\lambda$ & Sampling error wrt TVD \\ \hline
\end{tabular}
\end{center}
\caption{Notation Table}
\label{tb:notation}
\end{table}

In this section, we give a brief exposition of the concepts and results used in the rest of the paper. In Table~\ref{tb:notation} we provide a summary of the notation used in the paper.

\paragraph{R\' enyi divergence and differential privacy.}
R\'enyi divergence is the generalization of KL divergence to higher order and satisfies many useful properties~\citep{vanErvenH14}. More formally,  
\begin{defn}[R\'enyi Divergence]
For $0 < \alpha < \infty$, $\alpha \neq 1$ and distributions $P, Q$, such that $\supp(P) = \supp(Q)$ the {\em $\alpha$-R\'enyi divergence} between $P$ and $Q$ is

$$ R_\alpha(P , Q) = \frac{1}{\alpha - 1} \ln \int\limits_{\supp(Q)} \frac{P(x)^\alpha}{Q(x)^{\alpha - 1}} dx = \frac{1}{\alpha - 1} \ln \mathbb{E}_{x \sim Q}\left[ \frac{P(x)^\alpha}{Q(x)^\alpha}\right].$$

The $\alpha$-R\'enyi divergence for $\alpha = 1$ (resp. $\infty$) is defined by taking the limit of $R_\alpha(P,Q)$ as $\alpha$ approaches $1$ (resp. $\infty$) and equals the KL divergence (resp. max divergence).
\end{defn}

We next define differential privacy, our choice of the notion of data privacy. Central to the notion of differential privacy is the definition of {\em adjacent} or {\em neighboring} datasets. Two datasets $D$ and $D'$ are called adjacent if they differ in exactly one data point.
\begin{defn}
[Approximate Differential privacy~\citep{DMNS}]
A randomized mechanism $\calM: \calD^n \to \calR$ is said to have {\em $(\epsilon,\delta)$-differential privacy} , or $(\epsilon,\delta)$-DP for short, if for any adjacent $D,D' \in \calD^n$ and measurable subset $S \subset \calR$, it holds that 
\[
\mathsf{Pr}[\calM(D) \in S] \leq e^\epsilon \mathsf{Pr}[\calM(D) \in S] + \delta .
\]
When $\delta=0$, it is known as pure differential privacy, and we denote it by $\epsilon$-DP.
\end{defn}

\begin{defn}
[Renyi Differential privacy~\citep{mironov2017renyi}]
A randomized mechanism $\calM: \calD^n \to \calR$ is said to have {\em $(\alpha, \epsilon)$-R\'enyi differential privacy}, or $(\alpha,\epsilon)$-RDP for short, if for any adjacent $D,D' \in \calD^n$ it holds that
\[
R_\alpha(\calM(D), \calM(D') ) \leq \epsilon.
\]
\end{defn}

It is easy to see that $\epsilon$-DP is merely $(\infty,\epsilon)$-RDP. Similarly, the following fact relates $(\epsilon,\delta)$-DP to $(\alpha,\epsilon)$-RDP:
\begin{fact}[Proposition 3 in~\cite{mironov2017renyi}]
\label{fact:renyitoapx}
If $\mathcal{M}$ satisfies $(\alpha, \epsilon)$-RDP, then $\mathcal{M}$ is $(\epsilon + \frac{\log 1/\delta}{\alpha - 1}, \delta)$-differentially private for any $0 < \delta < 1$.
\end{fact}

R\'enyi divergences satisfy a number of other useful properties, which we list here.

\begin{fact}[Monotonicity {\citep[Theorem 3]{vanErvenH14}}]
\label{fact:monotonicity}
For any distributions $P, Q$ and $0 \leq \alpha_1 \leq \alpha_2$ we have $R_{\alpha_1}(P , Q)$ $\leq R_{\alpha_2}(P,Q)$. 
\end{fact}

\begin{fact}[Post-Processing {\citep[Theorem 9]{vanErvenH14}}]\label{fact:postprocessing}
For any sample spaces $\mathcal{X}, \mathcal{Y}$, distributions $P, Q$ over $\mathcal{X}$, and any function $f:\mathcal{X} \rightarrow \mathcal{Y}$ we have $R_\alpha(f(P) ,f(Q)) \leq R_\alpha(P , Q)$.
\end{fact}

\begin{lem}
[Gaussian dichotomy {\citep[Example 3]{vanErvenH14}}]\label{fact:gaussiandivergence}
Let $\calP = \calP_1 \times \calP_2 \times \cdots $ and $\calQ = \calQ_1 \times \calQ_2 \times \cdots $, where $\calP_i$ and $\calQ_i$ are unit variance Gaussian distributions with mean $\mu_i$ and $\nu_i$, respectively. Then 
\[
R_\alpha(\calP_i , \calQ_i) = \frac{\alpha}{2} (\mu_i-\nu_i)^2,
\]
and by additivity for $\alpha >0$,
\[
R_\alpha(\calP, \calQ) = \frac{\alpha}{2} \sum_{i=1}^\infty (\mu_i-\nu_i)^2.
\]
As a corollary, we have:
\[R_\alpha(N(0, \sigma^2 \mathbb{I}_p) , N(\bfx, \sigma^2 \mathbb{I}_p)) \leq \frac{\alpha \ltwo{\bfx}^2}{2\sigma^2}.\]
\end{lem}

\begin{fact}[Adaptive Composition Theorem {\citep[Proposition 1]{mironov2017renyi}}]\label{fact:composition}
Let $\mathcal{X}_0,$ $\mathcal{X}_1, \ldots, \mathcal{X}_k$ be arbitrary sample spaces. For each $i \in [k]$, let $f_i, f_i':\Delta(\mathcal{X}_{i-1}) \rightarrow \Delta(\mathcal{X}_i)$ be maps from distributions over $\mathcal{X}_{i-1}$ to distributions over $\mathcal{X}_i$ such that for any distribution $X_{i-1}$ over $\mathcal{X}_{i-1}$, $R_\alpha(f_i(X_{i-1}) , f_i'(X_{i-1})) \leq \epsilon_i$. Then, for $F, F':\Delta(\mathcal{X}_0) \rightarrow \Delta(\mathcal{X}_k)$ defined as $F(\cdot) = f_k(f_{k-1}( \ldots f_1(\cdot) \ldots )$ and $F'(\cdot) = f'_k(f'_{k-1}( \ldots f'_1(\cdot) \ldots )$ we have $R_\alpha(F(X_0) , F'(X_0)) \leq \sum_{i=1}^k \epsilon_i$ for any $X_0 \in \Delta(\mathcal{X}_0)$.
\end{fact}

\begin{fact}[Weak Triangle Inequality {\citep[Proposition 11]{mironov2017renyi}}]\label{fact:renyitriangle}
For any $\alpha > 1$, $q > 1$ and distributions $\calP_1, \calP_2, \calP_3$ with the same support:

$$R_\alpha(\calP_1, \calP_3) \leq \frac{\alpha - 1/q}{\alpha - 1}R_{q\alpha}(\calP_1, \calP_2) + R_{\frac{q\alpha - 1}{q - 1}}(\calP_2, \calP_3).$$
\end{fact}

We discuss two differentially private mechanisms for optimization in this paper. The first one is the {\em exponential mechanism.}~\citep{mcsherry2007mechanism}. Given some arbitrary domain $\mathfrak{D}$ and range $\mathfrak{R}$, the exponential mechanism is defined with respect to some loss function, $\ell: \mathfrak{D} \times \mathfrak{R} \to \mathbb R$.   
\begin{defn}
[Exponential mechanism~\citep{mcsherry2007mechanism}] 
Given a privacy parameter $\epsilon$, the range $\mathfrak R$ and a loss function $\ell : \mathfrak D \times \mathfrak R \to \mathbb R$, the {\em exponential mechanism} samples a single element from $\mathfrak R$ based on the probability distribution
\[
\pi_D(r) = \frac{e^{-\epsilon \ell(D,r)/2\Delta_\ell }}{\sum_{r \in \mathfrak R} e^{-\epsilon \ell(D,r)/2\Delta_\ell }}
\]
where $\Delta_\ell$ is the sensitivity of $u$, defined as $\Delta_\ell := \max_{D \sim D', \atop r \in \mathfrak R}|u(D,r) - u(D',r)|$. 
If $\mathfrak{R}$ is continuous, we instead sample from the distribution with pdf:
\[
p_D(r) = \frac{e^{-\epsilon \ell(D,r)/2\Delta_\ell }}{\int_{r \in \mathfrak R} e^{-\epsilon \ell(D,r)/2\Delta_\ell } dr}.
\] 
\end{defn}

\begin{algorithm}[ht]
\caption{Exponential mechanism}
\textbf{Input:}  Loss function $\calL$, constraint set $\calC$, Lipschitz constant $L$, number of iterations $k$, privacy parameter $\epsilon$, data set $\dataset$ of $n$-samples.
\begin{algorithmic}[1]
\State{Sample and {\bf output} a point $\privT$ from the constraint set $\calC$ w.p.  $\propto\exp\left(-\frac{\epsilon n}{2L\ltwo{\calC}}\cdot\calL(\theta;D)\right)$.\label{step:exp1}}
\end{algorithmic}
\label{alg:expmech}
\end{algorithm}

\begin{thm}
\label{thm:expConv}
Assume each of the individual loss function in $\calL(\theta;D)$ is $L$-Lipschitz within the constraint set $\calC$, individual loss function $\ell(\theta;\cdot)$ is convex, and the constraint set $\calC$ is convex. Then, Algorithm~\ref{alg:expmech} is $\epsilon$-differentially private. Furthermore, for $\privT$ as specified in Algorithm~\ref{alg:expmech}, over the randomness of the algorithm, 
\[\mathbb{E}_{\privT}[\riskerm(\privT)] = O\left(\frac{Lp\cdot\ltwo{\calC}}{\epsilon n}\right).\]
\label{thm:expMech}
\end{thm}

\mypar{Equivalence of Algorithm~\ref{alg:expmech} and Langevin diffusion} The following lemma, which is implied by, e.g. \cite[Theorem 4.1]{TanakaPLD}, shows that one can implement Algorithm 1 using only solutions to \cref{eq:PLD}; note that this does not necessarily mean solutions to \cref{eq:PLD} are efficiently sampleable.

\begin{lem}\label{lem:pldconv}
Let $\calL$ be a $M$-smooth function for some finite $M$. Then if $\beta_t = \beta$ for all $t$, then the stationary distribution of \eqref{eq:PLD} has pdf proportional to $\exp(-\beta \calL(\theta; \dataset)) \cdot \mathds{1}(\theta \in \calC)$.
\end{lem}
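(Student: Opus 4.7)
The goal is to show that the stationary distribution of the reflected SDE \eqref{eq:PLD} with constant inverse temperature coincides with the Gibbs measure $\pi(\theta) \propto \exp(-\beta \calL(\theta;\dataset))\mathds{1}(\theta\in\calC)$. My plan is to interpret \eqref{eq:PLD} as a Skorokhod-style reflected diffusion, write down the corresponding Fokker--Planck equation with an associated no-flux boundary condition, and then verify directly that the Gibbs measure annihilates both.

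First, I would argue that when $\calC$ is convex and $\calL$ is $M$-smooth, the process defined by \eqref{eq:PLD} is a strong solution of the reflected SDE $d\theta_t = -\beta\,\nabla\calL(\theta_t;\dataset)\,dt + \sqrt{2}\,dW_t - dK_t$, where $K_t$ is a boundary-confining process supported on $\{t:\theta_t\in\partial\calC\}$ with $dK_t$ pointing in the outward normal direction. The tangential projection in \eqref{eq:PLD} is exactly the infinitesimal realization of this Skorokhod reflection: away from $\partial\calC$ it is the identity, and at the boundary it subtracts the outward-normal component of the proposed increment, which is the continuous-time analogue of $\Pi_\calC(\theta+\bfv)-\theta$. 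Existence and uniqueness of such reflected SDEs on convex domains with smooth drifts is the content of Tanaka's theorem, and this is precisely what \cite[Section 2.1, 3.1]{BubeckPLD} verifies.

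Next, I would write the Fokker--Planck equation for the density $p_t$ of $\theta_t$: on the interior of $\calC$,
\[
\partial_t p_t = \nabla \cdot\bigl(\beta\, p_t\,\nabla\calL(\theta;\dataset) + \nabla p_t\bigr),
\]
and on $\partial\calC$ the reflection imposes the zero-flux (Neumann-type) boundary condition $\bfn(\theta)\cdot\bigl(\beta\, p_t\,\nabla\calL + \nabla p_t\bigr) = 0$ where $\bfn(\theta)$ is the outward unit normal. This boundary condition is the analytic expression of mass conservation under reflection. Stationarity is then the statement that both the interior PDE and the boundary condition are satisfied with $\partial_t p = 0$.

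Finally, plugging $\pi(\theta)\propto \exp(-\beta\,\calL(\theta;\dataset))$ into the flux $\beta\,\pi\,\nabla\calL + \nabla\pi$ yields $\beta\,\pi\,\nabla\calL - \beta\,\pi\,\nabla\calL = 0$ identically on the interior, so both the PDE and the boundary condition hold trivially. Under $M$-smoothness and boundedness of $\calC$, the reflected diffusion is positive recurrent and ergodic, giving uniqueness of the stationary measure. The main technical obstacle is the rigorous identification of the map $\tanproj_{\calC,\theta}$ with a Skorokhod reflection, which is delicate when $\calC$ is not smooth; however, for convex $\calC$ Tanaka's construction handles this cleanly, and since we may invoke \cite[Theorem 4.1]{TanakaPLD} directly, the proof reduces to the direct verification above.
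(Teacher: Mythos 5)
Your proposal is correct, and it in fact supplies more detail than the paper does: the paper gives no self-contained proof of this lemma, simply noting that it is implied by \cite[Theorem 4.1]{TanakaPLD}, with \cite[Sections 2.1, 3.1]{BubeckPLD} cited for the interpretation of \eqref{eq:PLD} as a Skorokhod-type reflected diffusion on a convex body. Your argument is the standard verification that this citation encapsulates: identify the tangential-projection dynamics with the reflected SDE $d\theta_t = -\beta\nabla\calL(\theta_t;\dataset)\,dt + \sqrt{2}\,dW_t - dK_t$ (Tanaka's construction on convex domains), write the Fokker--Planck equation $\partial_t p_t = \nabla\cdot(\beta p_t \nabla\calL + \nabla p_t)$ with the no-flux boundary condition, and observe that the Gibbs density makes the probability flux vanish identically, so both the interior equation and the boundary condition hold; uniqueness then follows from ergodicity of the reflected diffusion on the bounded convex set $\calC$. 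Two small points to keep in mind if you wanted to make this fully rigorous rather than a sketch: (i) boundedness and convexity of $\calC$ (which the paper assumes throughout) are needed both for normalizability of the Gibbs measure and for positive recurrence, so they should be stated as hypotheses; and (ii) for a general convex $\calC$ the boundary need not be smooth, so the Neumann-type condition should be phrased weakly (via the boundary local time $K_t$ in the martingale/weak formulation of stationarity) rather than pointwise with an outward normal --- you flag exactly this obstacle and defer it to Tanaka, which is the same resolution the paper adopts.
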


We recall that one can ensure smoothness by convolving $\calL$ (appropriately extended to all of $\mathbb{R}^p$) with the Gaussian kernel of finite variance~\citep[Appendix C]{FMTT18}. In particular, since we only need $M$ to be finite, we can take the convolution with the Gaussian kernel $\calN(\boldzero, \lambda^2 \mathbb{I}_p)$ for arbitrarily small $\lambda > 0$, and in turn the result of the convolution is $L/\lambda$-smooth (which is perhaps arbitrarily large but still finite) and differs from $\calL$ by an arbitrarily small amount everywhere in $\calC$.

We use the result by \cite{steinke2017tight} for our lower bound proof. We use their equivalent result for empirical mean (see equation (2) in \cite{steinke2017tight}) and for privacy parameters $(\epsilon,\delta)$ using a standard reduction~\citep{bun2018fingerprinting, steinke2015between}\footnote{\cite{steinke2017tight} present their result in the terms of population mean and  privacy parameters $(1,{\frac{1}{ns}})$.}:
\begin{thm}
\label{thm:steinke-ullman2017}
Fix $n,s,k \in \mathbb N$. Set $\beta = 1 + \frac{1}{2} \log \paren{\frac{s}{8\max\set{2k,28}}}$. Let $P^1,\cdots, P^s \sim \mathsf{Beta}(\beta,\beta)$ and let $X:= \set{\bfx_1, \cdots, \bfx_n}$  be such that $\bfx_i \in \set{0,1}^s$ for all $i \in [n]$, $\bfx_{i,j}$ is independent (conditioned on $P$) and $\E[\bfx_{i,j}] = P^j$ for all $i \in [n]$ and $j \in [s]$. Let $\mathcal M:(\set{0,1}^s)^n \to \set{0,1}^d$ be $(1, \frac{1}{ns})$-differentially private. Suppose $\norm{\calM(x)}_1 = \norm{\calM(x)}_2^2 =k$ for all $X$ with probability $1$ and 
\begin{align}
\E_{\mathcal M} \sparen{ \frac{1}{n}\sum_{i=1}^n \sum_{j \in [s] \atop \calM(x)^j = 1}  \bfx_{i,j}} \geq  { \frac{1}{n} \max_{S \subset [d] \atop |S|=k} \sum_{i=1}^n \sum_{u \in S}  \bfx_{i,u} } -  {\frac{k}{20} }.
    \label{eq:steinke-ullman-k}
\end{align}
Then $n \in \Omega \paren{  \sqrt{k} \log\paren{\frac{s}{k}}}$.
\end{thm}

\mypar{Results from statistics and machine learning}
We will sometimes use Fatou's lemma in our proofs. The form we will use is stated here for convenience:
\begin{lem}[Fatou's Lemma]
Let $\{X_i\}$ be a sequence of random variables such that there is some constant $c$ such that for all $i$, $Pr[X_i \geq c] = 1$. Then:

\[\Exp{\liminf_{i \rightarrow \infty} X_i} \leq \liminf_{i \rightarrow \infty} \Exp{X_i}.\]
\label{lem:Fatou}
\end{lem}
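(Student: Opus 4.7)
The plan is to reduce this to the classical nonnegative version of Fatou's lemma and invoke the monotone convergence theorem. First, I would define $Y_i := X_i - c$, which by hypothesis satisfies $Y_i \geq 0$ almost surely. Since $\liminf_i Y_i = \liminf_i X_i - c$ and $\liminf_i \E[Y_i] = \liminf_i \E[X_i] - c$ (assuming for now the expectations are finite; if some $\E[X_i] = +\infty$ the bound is trivial), the inequality to prove reduces to $\E[\liminf_i Y_i] \leq \liminf_i \E[Y_i]$ for a sequence of nonnegative random variables $Y_i$.

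For the nonnegative case, I would introduce the running-infimum sequence $Z_i := \inf_{k \geq i} Y_k$. By construction, $Z_i$ is nonnegative and monotonically nondecreasing in $i$ (the infimum is taken over nested shrinking index sets), and $\lim_{i \to \infty} Z_i = \liminf_{i \to \infty} Y_i$ pointwise on the sample space. Moreover, for each fixed $i$ and every $k \geq i$ we have $Z_i \leq Y_k$ almost surely, so taking expectations and then infimum over $k \geq i$ yields
\[
  \E[Z_i] \;\leq\; \inf_{k \geq i} \E[Y_k].
\]

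The final step applies the monotone convergence theorem to the nondecreasing nonnegative sequence $\{Z_i\}$, giving $\lim_i \E[Z_i] = \E[\lim_i Z_i] = \E[\liminf_i Y_i]$. The right-hand side $\inf_{k \geq i} \E[Y_k]$ is itself nondecreasing in $i$, so its limit as $i \to \infty$ equals $\liminf_i \E[Y_i]$. Taking $i \to \infty$ on both sides of the displayed inequality yields $\E[\liminf_i Y_i] \leq \liminf_i \E[Y_i]$, and undoing the shift by $c$ recovers the statement of the lemma.

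The main obstacle is that interchanging limits with expectations is not valid in general, which is precisely why the shift $Y_i := X_i - c$ is essential: it produces a nonnegative sequence, on which MCT applies to the auxiliary running-infimum process $Z_i$. A minor secondary point is that the argument carries over unchanged if some $\E[X_i]$ equals $+\infty$, since then $\liminf_i \E[X_i]$ either remains finite (in which case one simply restricts to the subsequence with finite expectations) or the right-hand side is $+\infty$ and the inequality is trivial.
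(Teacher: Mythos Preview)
Your proof is correct and is essentially the standard textbook derivation of Fatou's lemma via the monotone convergence theorem. The paper, however, does not prove this statement at all: it is listed in the preliminaries as a well-known fact and invoked without proof, so there is no paper proof to compare against. Your argument would serve as a complete proof if one were required.
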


For our SCO bounds, we will use uniform stability. Uniform stability of a learning algorithm is a notion of algorithmic stability introduced to derive high-probability bounds on the generalization error. Formally, it is defined as follows:

\begin{defn}
[Uniform stability~\citep{bousquet2002stability}]
A mechanism $\calM$ is {\em $\mu(n)$-uniformly stable} with respect to $\ell$ if for any pair of databases $\dataset, \dataset'$ of size $n$ differing in at most one individual:

\[\sup_{d \in \tau} \left[\mathbb{E}_{\calM} \left[ \ell(\calM(\dataset), d)\right] - \mathbb{E}_{\calM} \left[ \ell(\calM(\dataset'), d)\right]\right] \leq \mu(n).\]
\label{def:unifStab}
\end{defn}

In this paper, we will need the following result.
\begin{lem}[\cite{bousquet2002stability}]\label{lem:emptopop-us}
Suppose $\calM$ is $\mu(n)$-uniformly stable. Then:

\[\mathbb{E}_{\dataset \sim \calD^n, \calM}[\riskpop(\calM(\dataset))] \leq \mathbb{E}_{\dataset \sim \calD^n, \calM}[\riskerm(\calM(\dataset))] + \mu(n).\]
\end{lem}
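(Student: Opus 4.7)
The plan is to execute the classical Bousquet--Elisseeff ``ghost sample / swap" argument and then peel off the minimizer terms separately. Let me first reduce the target inequality to a pure generalization-gap bound: by definition,
\begin{equation*}
\mathbb{E}[\riskpop(\calM(\dataset))] - \mathbb{E}[\riskerm(\calM(\dataset))]
= \mathbb{E}\bigl[\mathbb{E}_{d \sim \calD}[\ell(\calM(\dataset),d)] - \calL(\calM(\dataset);\dataset)\bigr]
+ \mathbb{E}\bigl[\min_{\theta} \calL(\theta;\dataset)\bigr] - \min_\theta \mathbb{E}_{d}[\ell(\theta;d)].
\end{equation*}
Jensen's inequality (applied to the concave function $\min$) gives $\mathbb{E}[\min_\theta \calL(\theta;\dataset)] \le \min_\theta \mathbb{E}_d[\ell(\theta;d)]$, so the second line is $\le 0$ and can simply be discarded. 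It therefore suffices to prove the generalization bound
\begin{equation*}
\mathbb{E}_{\dataset,\calM}\bigl[\mathbb{E}_{d \sim \calD}[\ell(\calM(\dataset),d)]\bigr]
\;-\; \mathbb{E}_{\dataset,\calM}[\calL(\calM(\dataset);\dataset)] \;\le\; \mu(n).
\end{equation*}

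To prove this, I would introduce an independent ``ghost" sample $\dataset' = (d_1',\ldots,d_n') \sim \calD^n$ and, for each $i$, the swapped dataset $\dataset^{(i)} := (d_1,\ldots,d_{i-1},d_i',d_{i+1},\ldots,d_n)$, which differs from $\dataset$ in exactly one record. Two identities drive the argument. First, since $d_i'$ is independent of $\dataset$ and has distribution $\calD$,
\begin{equation*}
\mathbb{E}_{\dataset,\calM}\bigl[\mathbb{E}_d[\ell(\calM(\dataset),d)]\bigr]
= \frac{1}{n}\sum_{i=1}^n \mathbb{E}_{\dataset,\dataset',\calM}[\ell(\calM(\dataset),d_i')].
\end{equation*}
Second, by exchangeability (rename $d_i \leftrightarrow d_i'$ in the expectation over $\dataset,\dataset'$),
\begin{equation*}
\mathbb{E}_{\dataset,\calM}[\calL(\calM(\dataset);\dataset)]
= \frac{1}{n}\sum_{i=1}^n \mathbb{E}_{\dataset,\calM}[\ell(\calM(\dataset),d_i)]
= \frac{1}{n}\sum_{i=1}^n \mathbb{E}_{\dataset,\dataset',\calM}[\ell(\calM(\dataset^{(i)}),d_i')].
\end{equation*}
Subtracting, the generalization gap becomes $\tfrac{1}{n}\sum_i \mathbb{E}_{\dataset,\dataset'}\bigl[\mathbb{E}_{\calM}[\ell(\calM(\dataset),d_i') - \ell(\calM(\dataset^{(i)}),d_i')]\bigr]$.

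The final step invokes Definition~\ref{def:unifStab} directly: for every fixed $\dataset,\dataset'$ (equivalently, for every fixed $d_i' \in \tau$ and every neighboring pair $\dataset,\dataset^{(i)}$), the inner bracket is bounded by $\mu(n)$ because $\dataset$ and $\dataset^{(i)}$ differ in a single individual and the supremum in the stability definition is taken over $d \in \tau$. Averaging over $i$ and taking expectations preserves this bound, yielding the claim. I expect no real obstacle: the only mildly delicate points are keeping track that the stability definition is one-sided (it bounds the signed difference, not its absolute value, but here the sign is the right one after our algebraic manipulation --- alternatively one runs the same argument with $\dataset$ and $\dataset^{(i)}$ swapped to get the absolute value), and making sure the Jensen step justifies dropping the minimizer-discrepancy term rather than needing to control it.
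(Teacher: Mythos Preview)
Your proposal is correct and is exactly the standard Bousquet--Elisseeff ghost-sample argument; the Jensen step handling the minimizer discrepancy and the swap identities are all sound. The paper itself does not supply a proof of this lemma at all---it merely states it as a known result cited from \cite{bousquet2002stability}---so there is no alternative approach to compare against, and your writeup is precisely what the cited reference does.
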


We also use KL divergence and TVD, and the relation between the two. 
\begin{defn}
The KL divergence (equivalent to the 1-R\'enyi divergence) between two distributions $P, Q$ is given by $R_1(P, Q) := \mathbb{E}_{x \sim P} \log \left(\frac{P(x)}{Q(x)}\right)$.
\end{defn}

\begin{lem}[Pinsker's inequality]
\label{lem:pinsker}
Let $TVD(P, Q)$ be the total variation distance between $P$ and $Q$. Then 

\begin{align*}TVD(P, Q) \leq \sqrt{\frac{1}{2} R_1(P, Q)}.\end{align*}
\end{lem}

Next, we define the LSI constant of a distribution.

\begin{defn}
A distribution $P$ satisfies LSI with constant $c$ if for all smooth functions $g: \mathbb{R}^p \rightarrow \mathbb{R}$ with $\mathbb{E}_{x \sim P}[g(x)^2]  < \infty$:
\begin{align*}\mathbb{E}_{x \sim P}[g(x)^2 \log (g(x)^2)] - \mathbb{E}_{x \sim P}[g(x)^2] \cdot\mathbb{E}_{x \sim P}[\log (g(x)^2)] \leq \frac{2}{c} \mathbb{E}_{x \sim P}[\ltwo{\nabla g(x)}^2].\end{align*}
\label{def:LSI}
\end{defn}
\section{Proofs from Section~\ref{sec:rashomonSampling} and Section~\ref{sec:sgld}}
\label{sec:deferredsectionrashomonsampling}

\subsection{Proof of Lemma~\ref{lem:renyifinite}}\label{app:langevin-deferred}
\begin{proof}
For ease of presentation, we will show a divergence bound between $\Theta_t$, $\Theta_t'$ which are the distributions of $\theta_t, \theta_t'$, and then describe how to modify the proof to show the same bound between $\Theta_{[0, t]}, \Theta_{[0, t]}'$.

Let $\Psi_{\dataset, m, i}$ be a map from (distributions over) $\mathbb{R}^p$ to (distributions over) $\mathbb{R}^p$ that takes the point $\theta$ to the distribution $\Pi_{\calC}\left(N\left(\theta - \left(\frac{\beta t}{m} dt \right) \grad \calL(\theta; \dataset), 2 \frac{t}{m} \mathbb{I} \right)\right)$, where $\Pi_{\calC}$ is the $\ell_2$-projection into $\calC$. 
It is well known (see e.g. \Cref{fact:gaussiandivergence}) that:

\[R_\alpha(N(0, \sigma^2 \mathbb{I}), N(\bfx, \sigma^2 \mathbb{I})) \leq \frac{\alpha \ltwo{\bfx}^2}{2\sigma^2}.\]

So by post-processing (Fact~\ref{fact:postprocessing}) and the Lipschitzness assumption, $R_\alpha(\Psi_{\dataset, m, i}(\theta), \Psi_{\dataset', m, i}(\theta)) $ is bounded by

\begin{align*}
    & R_\alpha \left(N \left(\theta - \left(\frac{\beta t}{m} \right) \grad \calL(\theta; \dataset), \frac{2t}{m} \mathbb{I}\right), N\left(\theta - \left(\frac{\beta t}{m}\right) \grad \calL(\theta; \dataset'), \frac{2t}{m} \mathbb{I} \right) \right) \\
    &=R_\alpha\left(N \left(\textbf{0}, \frac{2t}{m} \mathbb{I} \right), N \left( \left(\frac{\beta t}{m}\right)(\grad \calL(\theta; \dataset) - \grad \calL(\theta; \dataset')), 2\frac{t}{m} \mathbb{I} \right) \right) \\
    &\leq \frac{\alpha \Delta^2 }{4 } \cdot \frac{\left(\frac{\beta t}{m}\right)^2}{t/m}.
\end{align*}

Let $\Psi_{\dataset, m}$ denote the composition $\Psi_{\dataset, m, m} \circ \Psi_{\dataset, m, m-1} \circ \ldots \circ \Psi_{\dataset, m, 1}$. By \Cref{fact:composition}, we have

\[ R_\alpha(\Psi_{\dataset, m}(\Theta_0), \Psi_{\dataset', m}(\Theta_0)) \leq \sum_{i=1}^m \max_\theta \set{R_\alpha(\Psi_{\dataset, m, i}(\theta), \Psi_{\dataset', m, i}(\theta))}.\]

Plugging in the bound on $R_\alpha(\Psi_{\dataset, m, i}(\theta), \Psi_{\dataset', m, i}(\theta))$, we get

\[ R_\alpha(\Psi_{\dataset, m}(\Theta_0), \Psi_{\dataset', m}(\Theta_0)) \leq \frac{\alpha \Delta^2 }{4 } \cdot \frac{m}{t} \sum_{i=1}^m {\left(\frac{\beta t}{m}\right)^2} = \frac{\alpha  \beta^2 \Delta^2 t}{4}\]

Note that $\Theta_t = \lim_{m \rightarrow \infty} \Psi_{\dataset, m}(\Theta_0)$, and $\Theta'_t = \lim_{m \rightarrow \infty} \Psi_{\dataset', m}(\Theta_0)$. Since $\exp((\alpha - 1) R_\alpha(\calP, \calQ))$ is a monotone function of $ R_\alpha(\calP, \calQ)$ and is the expectation of a positive random variable, by Fatou's lemma we have:
\begin{align*}
    R_\alpha(\Theta_t, \Theta_t') &\leq \lim_{m \rightarrow \infty} R_\alpha(\Psi_{\dataset, m}(\Theta_0), \Psi_{\dataset', m}(\Theta_0)) \\
    & \leq  \frac{\alpha  \beta^2 \Delta^2 t}{4}.
\end{align*} 

This gives the bound on $R_{\alpha}(\Theta_t, \Theta_t')$. To obtain the same bound for $R_{\alpha}(\Theta_{[0, t]}, \Theta_{[0, t]}')$, we  modify $\Psi_{\dataset, m, i}$ so that instead of receiving  $\Theta_{(i-1)t/m}$ and outputting $\Theta_{it/m}$, it receives the joint distribution $\{\Theta_{jt/m}\}_{0 \leq j \leq i-1}$ and outputs $\{\Theta_{jt/m}\}_{0 \leq j \leq i}$ by appending the (also jointly distributed) variable 

\[\Theta_{it/m} =\Pi_{\calC}\left(N\left(\theta - \left(\frac{\beta t}{m} \right) \grad \calL(\Theta_{(i-1)t/m}; \dataset), 2 \frac{t}{m} \mathbb{I} \right)\right.\]

That is, we update $\Psi_{\dataset, m ,i}$ so it outputs the distributions of all iterates seen so far instead of just the distribution of the last iterate; the limiting value of the joint distribution $\{\Theta_{jt/m}\}_{0 \leq j \leq i}$ is then $\Theta_{[0, t]}$ according to $\cref{eq:PLD}$, and the same divergence bound holds.
\end{proof}

\subsection{Proof of Theorem~\ref{thm:expMech-rashomon-approx}}\label{sec:rashomon-privacy-proof}

We first need the following results, which let us analyze the LSI constant of the Gibbs distribution of interest easily. These results were originally stated for unconstrained domains defined over the space of real numbers, a proof for which can also be found in \cite{ledoux2001logarithmic} using the theory of semigroup (see Corollary 1.4, 1.6 and Lemma 1.2). For the general convex set,  $\calC$, we refer the readers to Theorem 2.1 in \cite{KM16}.

\jnote{I could not find the result in \cite{ledoux1999concentration}, but found in his 2001 paper that I cited along with the references}
\begin{lem}[Proposition 3 and Corollaire 2 in \cite{Bakry1985}]\label{lem:bakry-emery}
Let $P$ be the distribution with density proportional to $\exp(-f(x)) \cdot \mathds{1}(x \in \calC)$ for convex $\calC$. If $f$ is $m$-strongly convex, then $P$ satisfies LSI with constant $m$.
\end{lem}

\begin{lem}[Page 1184 in \cite{Holley1987LogarithmicSI}]\label{lem:holley-stroock}
Let $f, f'$ be two functions such that $$\sup_{\theta \in \calC} | f(\theta) - f'(\theta)| \leq \Delta.$$ Suppose the distribution with density proportional to $\mathds{1}(\theta \in \calC) \cdot \exp(-f)$ satisfies LSI with constant $c$. Then the distribution with density proportional to $\mathds{1}(\theta \in \calC) \cdot \exp(-f')$ satisfies LSI with constant $c \cdot \exp(-\Delta)$.
\end{lem}

The following result shows convergence of \eqref{eq:PLD} under LSI. 

\begin{lem}[Theorem 4 of \cite{vempala2019renyi}]\label{lem:vw19-convergence-kl}
Suppose $Q$ satisfies LSI with constant $c$. Then let $P_0$ be any initial distribution over $\theta_0$, and $P_t$ be the distribution over $\theta_t$ given by running \eqref{eq:LD} on $-\log q$, where $q$ is the density of $Q$, for $\beta = 1$. Then
$R_1(P_t, Q) \leq \exp(-2ct) \cdot R_1(P_0, Q).$
\end{lem}

Using these results, we first prove an LSI constant for the Gibbs distribution.

\begin{lem}\label{lem:rashomon-lsi}
Suppose we have non-negative $\ell(\theta; d)$ such that $\ell$ is $m$-strongly convex wrt $\theta$ for all $d$, and let $Q$ be the distribution with density proportional to 
$$\mathds{1}(\theta \in \calC) \cdot \exp\left(-\beta \max\left\{\threshold + \min\limits_{\theta^* \in \calC}\calL(\theta^*; D), \calL(\theta; D) \right\} \right).$$ Then $Q$ satisfies LSI with constant $c$, where

\begin{align*}
c := \begin{cases}
        \frac{m}{e \threshold} & \text{for } \beta \threshold > 1\\
        \beta m \exp(-\beta \threshold) & \text{for } \beta \threshold \leq 1
\end{cases}.
\end{align*}
\end{lem}
\begin{proof}
For $a \in [0, 1]$, consider the distribution with density proportional to $e^{-f(\theta)} \cdot \mathds{1}(\theta \in \calC)$ for 
\begin{align}
f(\theta) := \beta \paren{ a \cdot  \calL(\theta; D) + (1-a)\cdot \max\left\{\threshold + \min\limits_{\theta \in \calC}\calL(\theta; D), \calL(\theta; D)\right\} }.
\label{eq:functionf}
\end{align}

Since $\calL$ is $m$-strongly convex, the function  $\max\{\threshold + \min_{\theta \in \calC}\calL(\theta; D) , \calL(\theta; D)\}$ is a  convex function. Therefore, $f(\theta)$ defined in \cref{eq:functionf} is an $(a \beta m)$-strongly convex function. In other words, this distribution satisfies LSI with constant $a \beta m$ using \Cref{lem:bakry-emery}.

Now the distribution $Q$ has density proportional to $\exp(-f)$, where $$f' := \max\{\threshold + \min_{\theta \in \calC}\calL(\theta; D), \calL(\theta; D)\}.$$ 
When $\calL(\theta; D) > \threshold$, $f = f'$. Otherwise, since $\ell$ is non-negative, we have that $f$ differs from $f'$ by at most $a \beta \threshold$ everywhere. Then by \cref{lem:holley-stroock}, $Q$ satisfies LSI with constant $a \beta m \exp(-a \beta \threshold)$. 

Now, we maximize this bound over $a \in [0, 1]$. If $\beta \threshold \leq 1$, then this bound is maximized at $a = 1$ and we get an LSI constant of $\beta m \exp(-\beta \threshold)$. Otherwise, this bound is maximized at $a = 1 / \beta \threshold$ and we get an LSI constant of $\frac{m}{e \threshold}$. This completes the proof of \Cref{lem:rashomon-lsi}.
\end{proof}

Given the LSI constant of Rashomon sampler, we can use Lemma~\ref{lem:vw19-convergence-kl} to upper bound the time we need to run DP-LD.

\begin{lem}\label{lem:rashomon-convergence}
Under the assumptions of Theorem~\ref{thm:expMech-rashomon-approx}, let 
$$\calL'(\theta; D) := \max \left\{\threshold + \min\limits_{\theta^* \in \calC}\calL(\theta^*; D), \calL(\theta; D) \right\}.$$ 
Let $Q$ be the distribution with density proportional to $\exp(-\beta \calL')$. Let $P_0$ be the distribution over $\theta_0$ that is uniform over $\calC$. Let $P_t$ be the resulting distribution over $\theta_t$ given by running \eqref{eq:PLD} on $\calL'$. Then assuming $\threshold\leq L\ltwo{\calC}$:

\begin{itemize}
    \item If $\beta \threshold > 1$, for $t = \frac{e \threshold}{2 m} \log(\frac{2\beta L \ltwo{\calC} }{\delta^2})$, we have $TVD(P_t, Q) \leq \delta/2.$
    \item If $\beta \threshold \leq 1$, for $t = \frac{e}{2 \beta m} \log(\frac{2\beta L \ltwo{\calC} }{\delta^2})$, we have $TVD(P_t, Q) \leq \delta/2.$
\end{itemize}

\end{lem}
\begin{proof}
We wish to apply Lemma~\ref{lem:vw19-convergence-kl}, which was originally stated in the unconstrained case for \eqref{eq:LD}. One can see that it applies to running \eqref{eq:PLD} on $\calL'$ by the following argument: Consider extending $\calL'$ to $\mathbb{R}^p$ by defining $\calL'(\theta') = \calL'(\Pi_\calC(\theta')) + c \ltwo{\theta' - \calL'(\Pi_\calC(\theta'))}$ for $\theta' \not \in \calC$. As $c$ goes to infinity, the Gibbs distribution induced by the extended loss in the unconstrained setting approaches the Gibbs distribution induced by the loss in the constrained setting, and \eqref{eq:LD} approaches \eqref{eq:PLD}. 

We have:
\begin{align}
\begin{split}
R_1(P_0, Q) &\leq \max_{\theta \in supp(P_0)} \log(P_0(\theta) / Q(\theta))  \\
    & \leq \log\left(\frac{1}{\int\limits_{\theta \in \calC} 1\ d \theta} \cdot \frac{\int\limits_{\theta \in \calC} \exp(- \beta \max\{\calL(\theta^*; D) + \threshold, \calL(\theta; D)\})\ d \theta}{\min_{\theta \in \calC} \exp(-\beta \max\{\calL(\theta^*; D) + \threshold, \calL(\theta; D)\})}\right)\\
    & \leq \log\left(\frac{\exp(-\beta(\calL(\theta^*; D)  + \threshold)}{\exp(- \beta (L \ltwo{\calC} + \threshold))}\right) \\
    &\leq 2 \beta L \ltwo{\calC}.    
\end{split}
\label{eq:R_1P_0bound}
\end{align}

Here, we use the fact that each $\ell$ has minimum 0. 
Recall that 
$$\calL'(\theta; D) := \max \left\{\threshold + \min\limits_{\theta^* \in \calC}\calL(\theta^*; D), \calL(\theta; D) \right\}.$$ 

Now running \eqref{eq:PLD} on $\beta \calL'$, replacing $\beta$ in \eqref{eq:PLD} with $1$, is equivalent to running \eqref{eq:PLD} on $\calL'$. \jnote{This sentence needs to be elaborated.}
Therefore, applying Lemma~\ref{lem:vw19-convergence-kl} and Lemma~\ref{lem:rashomon-lsi}:

\begin{itemize}
    \item If $\beta \threshold > 1$, for $t = \frac{e \threshold}{2 m} \log\left(\frac{4 \beta L \ltwo{\calC} }{\delta^2}\right)$, we have:
    
    \begin{align*}R_1(P_t, Q) \leq R_1(P_0,Q) \cdot \exp\left(-\frac{2 m}{e \threshold} t \right) \leq 2 \beta L \ltwo{\calC}  \cdot \exp\left(-\frac{2 m}{e \threshold} t \right) = \delta^2/2,\end{align*}
    where the first inequality is using Lemma~\ref{lem:vw19-convergence-kl} and Lemma~\ref{lem:rashomon-lsi} and the second inequality is due to \cref{eq:R_1P_0bound}. The first bullet now follows using Pinsker's inequality (Lemma~\ref{lem:pinsker}).
    
    \item If $\beta \threshold \leq 1$, note that $\beta m \exp(-\beta \threshold) \geq \beta m / e$. So for $t = \frac{e}{2 \beta m} \log(\frac{4 \beta L \ltwo{\calC} }{\delta^2})$, we have:
    
    \begin{align*}R_1(P_t, Q) \leq R_1(P_0,Q) \cdot \exp\left(-\frac{2 m}{e \threshold} t \right) \leq  2 \beta L \ltwo{\calC}  \cdot \exp(-\frac{2 \beta m}{e} t) = \delta^2/2,\end{align*}
    where the first inequality is using Lemma~\ref{lem:vw19-convergence-kl} and Lemma~\ref{lem:rashomon-lsi} and the second inequality is due to \cref{eq:R_1P_0bound}. The second bullet now follows using Pinsker's inequality (Lemma~\ref{lem:pinsker}).
\end{itemize}

This completes the proof of \Cref{lem:rashomon-convergence}.
\end{proof}

In Lemma~\ref{lem:rashomon-convergence}, we used a uniform distribution on $\calC$ as our initialization (i.e., $P_0$ was uniform distribution over $\calC$). In the case where $\calC$ is \textit{unconstrained} and the losses satisfy $\ltwo{\nabla \ell(\theta; d) - \nabla \ell(\theta; d')} \leq L$ instead of Lipschiztness within $\calC$, we can obtain the same bound by letting $\calC$ be the convex hull of the minimizers of all per-example loss functions $\ell(\theta; d)$ and using a uniform distribution on $\calC$ as our initial distribution.

We now complete the proof of Theorem~\ref{thm:expMech-rashomon-approx}.
\begin{proof}
Let $\bfv_D(\theta)=\partial_{\theta}\max\{\calL(\theta;D), \calL(\theta^*; D)+\threshold\}$ be the sub-differential of the score function used in the Gibbs distribution.
Using the Lipschitz property and the smoothness assumption on $\ell(\theta;d)$,  for any two neighboring data data sets $D$ and $D'$, we have 

\begin{equation}\ltwo{\bfv_D(\theta)-\bfv_D(\theta)}\leq 3 \cdot \max\left\{\frac{L}{n},\sqrt{\frac{M\threshold}{2}}\right\}.
\label{eq:sends}
\end{equation}
 In particular, the Lipschitzness bounds the sensitivity between the gradients of $\calL(\theta;D)$ and $\calL(\theta;D')$ by $L/n$. Using smoothness, for $D$, the sensitivity between the gradients of $\calL(\theta;D)$ and\\ $\max \left\{\threshold + \min\limits_{\theta^* \in \calC}\calL(\theta^*; D), \calL(\theta; D) \right\}$ is bounded by $\sqrt{M\threshold/2}$. Similarly, we have the sensitivity between the gradients of $\calL(\theta;D')$ and $\max \left\{\threshold + \min\limits_{\theta^* \in \calC}\calL(\theta^*; D'), \calL(\theta; D') \right\}$ bounded by $\sqrt{M\psi/2}$.

By Lemma~\ref{lem:renyifinite}, \Cref{fact:renyitoapx}, and sensitivity bound in~\cref{eq:sends}, in order for $\{\Theta_{t'}\}_{t' \in [0, t]}$ to satisfy $(\epsilon, \delta)$-DP, it suffices if 
\begin{equation}
\beta \leq \frac{\epsilon}{6 \sqrt{2} \cdot \max\left\{\frac{L}{n},\sqrt{\frac{M\threshold}{2}}\right\} \sqrt{t \log(1/\delta)}}.
\label{eq:9092}
\end{equation}

Suppose $\beta \threshold \leq 1$. Recall that, in Lemma~\ref{lem:rashomon-convergence},   $t = \frac{e}{2 \beta m} \log(\frac{4 \beta L \ltwo{\calC} }{\delta^2})$.   Plugging in this value of $t$ in~\cref{eq:9092} and observing that $\threshold\leq\frac{L\ltwo{\calC}}{2}$, it suffices to ensure $(\epsilon,\delta)$-DP if 
\begin{equation}
    \beta=\tilde{O}\left(\frac{\epsilon^2 n^2 m }{\max\left\{L,n\sqrt{{M\threshold}/2}\right\}^2 \log(L\ltwo{\calC}/\delta^2)\log(1/\delta)} \right).
    \label{eq:9090}
\end{equation}

We now consider two cases (on top of the constraint that $\psi \leq L\norm{\calC}_2/2$): 
\begin{enumerate}
    \item [(i)] $\threshold>\frac{2L^2}{Mn^2}$, and 
    \item [(ii)] $\threshold\leq\frac{2L^2}{Mn^2}$.
\end{enumerate} 

In the case (i), in order to satisfy $\beta\threshold\leq 1$ it is sufficient to set $\beta = \tilde{\Theta}\left(\frac{\epsilon^2 (m/M)}{ \log((L\ltwo{\calC}-\threshold)/\delta^2)\log(1/\delta)}\cdot \frac{1}{\threshold} \right).$ In the second, case it sufficient to set $\beta = \tilde{\Theta}\left(\frac{\epsilon^2 n^2 m }{L^2 \log((L\ltwo{\calC}-\threshold)/\delta^2)\log(1/\delta)} \right)$. 

We will not analyze the setting when $\beta\threshold>1$. From Lemma~\ref{lem:rashomon-convergence}, it is not hard to observe that it will not provide any better conditions on $\threshold$ and $\beta$ to ensure $(\epsilon,\delta)$-DP.

So, we have that for the choices of $\threshold, \beta, t$ in Theorem~\ref{thm:expMech-rashomon-approx}, outputting $\{\Theta_t'\}_{t' \in [0, t]}$ satisfies $(\epsilon, \delta)$-DP. Furthermore, by Lemma~\ref{lem:rashomon-convergence}, this gives the desired bound on total variation distance completing the proof of Theorem~\ref{thm:expMech-rashomon-approx}.
\end{proof}

\subsection{Probability of Hitting the Rashomon Set}
\label{sec:probRashomon}
\begin{thm}
Let $\privT$ be the model output by a Rashomon sampler. Let  $\Pr[\calL(\privT;D)\leq\min\limits_{\theta\in\calC}\calL(\theta;D)+\threshold+\gamma]\geq 1 - \phi$. Then for Rashomon set $\rashomonemp$,
$
\Pr\left[\privT\in\rashomonemp\right]\geq 1- \phi - \frac{p \gamma}{\threshold}.
$
\label{thm:hitting}
\end{thm}
\begin{proof}
Consider the differential conic region $\Omega$  centered at the true minimizer $\theta^*=\argmin\limits_{\theta\in\calC}\calL(\theta;D)$. Let $A$ be the cone from this region with height $r$, and $B$ be the one with height $(r+\Delta)$, with $r$ and $\Delta$ chosen such that $\theta$ in the boundaries of the cones $A$ and $B$ satisfy $\calL(\theta;
D)=\threshold + \min_\theta \calL(\theta; D)$ and $\calL(\theta;D)=\threshold+\gamma+ \min_\theta \calL(\theta; D)$ respectively. By law of total probability, it suffices to show that conditioned on sampling $\theta$ in any such cone, the desired probability bound holds. By the maximality condition on the Rashomon sampler, 

\begin{align*} \Pr\left[\privT\in\rashomonemp | \theta \in \Omega\right]\geq 1 - \phi - \frac{{\sf Vol}(A)}{{\sf Vol}(B)} = 1 - \phi - \left(1 - \frac{\Delta}{r + \Delta}\right)^p \geq 1 - \phi - \frac{p\Delta}{r}.\end{align*}

Note that within $\Omega$, we have $\calL(\theta; D) = f(\ltwo{\theta - \theta^*})$ where $f$ is a convex function. By convexity, we have:

\begin{align*}f(r) - f(0) \leq f'(r) \cdot r \rightarrow f'(r) \geq \frac{f(r) - f(0)}{r} = \threshold / r. \end{align*}
\begin{align*}\gamma = f(r + \Delta) - f(r) \geq f'(r) \cdot \Delta \geq \frac{\threshold \Delta}{r} \implies \Delta / r \leq \gamma / \threshold.\end{align*}

This completes the proof of Theorem~\ref{thm:hitting}.
\end{proof}

\subsection{Proof of Theorem~\ref{thm:sgld-sampler-tvd}}\label{app:kernel}

We start by showing some properties of the convolved loss function:

\begin{thm}\label{thm:kernel}
Let $\calL$ be a convex and $M$-smooth loss, let $\calL'(\theta) = \max\{\calL(\theta), \calL(\theta^*) + \threshold\}$ and let $\widetilde{\calL}'(\theta) := \mathbb{E}_{\xi \sim N(0, \lambda^2 \mathbb{I}_p)}\left[\calL'(\theta + \xi)\right]$. Then:
\begin{enumerate}
    \item $\max_{\theta} \left|\widetilde{\calL}'(\theta) - \calL'(\theta)\right| \leq 2\threshold + Mp\lambda^2$. 
    \item $\widetilde{\calL}'(\theta)$ is $M + \frac{\sqrt{\threshold M / 2}}{ \lambda}$-smooth.
\end{enumerate}
\end{thm}
\begin{proof}
Let $\widetilde{\calL}(\theta) := \mathbb{E}_{\xi \sim N(0, \lambda^2 \mathbb{I}_p)}\left[\calL(\theta + \xi)\right]$. Then:

\[
    \max_{\theta} \left|\widetilde{\calL}'(\theta) - \calL'(\theta)\right| \leq 2 \threshold + \max_{\theta} \left|\widetilde{\calL}(\theta) - \calL(\theta)\right|.
\]
We bound the second term for all $\theta$ simultaneously as follows:
\begin{align*}
 \left|\widetilde{\calL}(\theta) - \calL(\theta)\right| &= \left|\mathbb{E}_{\xi \sim N(0,\lambda^2 \mathbb{I}_p)}\left[\calL(\theta + \xi) - \calL(\theta)\right]\right| \\
 &\stackrel{(*)}{=} \left|\mathbb{E}_{\xi \sim N(0,\lambda^2 \mathbb{I}_p)}\left[\calL(\theta + \xi) - \calL(\theta) - \langle\nabla \calL(\theta), \xi\rangle\right]\right| \\
 &\stackrel{(**)}{\leq} \mathbb{E}_{\xi \sim N(0,\lambda^2 \mathbb{I}_p)}\left[M \ltwo{\xi}^2\right] = M p \lambda^2.
\end{align*}

In equality $(*)$, we use the fact that $\xi$ is mean 0, and in inequality $(**)$, we use the fact that $0 \leq \calL(\theta + \xi) - \calL(\theta) - \langle\nabla \calL(\theta), \xi\rangle \leq M \ltwo{\xi}^2$ by convexity and $M$-smoothness of $\calL$. This gives the first part of the theorem. 

For the second part, fix any $\theta_1, \theta_2$. 
The smoothness parameter of $\widetilde{\calL}'$ is at most the smoothness parameter of $\widetilde{\calL}$ (which is at most $M$, since $\calL$ is $M$-smooth), plus the smoothness of $\widetilde{\calL}' - \widetilde{\calL}$. 

Now $\widetilde{\calL}' - \widetilde{\calL}$ is the convolution of the Gaussian kernel with the function $\calL' - \calL$, which is $0$ outside of Rashomon set, $\calG$ and equal to $\calL(\theta^*) + \threshold - \calL(\theta)$ inside the Rashomon set, $\calG$. By $M$-smoothness of $\calL$, $\calL' - \calL$ is globally $\sqrt{\frac{\threshold M}2}$-Lipschitz. Then, by Theorem 33 of \cite{FMTT18}, $\widetilde{\calL}' - \widetilde{\calL}$ is $\frac{1}{\lambda}\sqrt{\frac{\threshold M}2}$-smooth, giving the second part of the theorem.

This completes the proof of Theorem~\ref{thm:kernel}.
\end{proof}

Similarly to Lemma~\ref{lem:rashomon-lsi} we have the following:

\begin{lem}\label{lem:kernel-lsi}
Let $\widetilde{\calL}'$ be defined as in Theorem~\ref{thm:kernel}, for some $\lambda \geq 0$. Then the distribution with density proportional to $\exp(-\beta \cdot \widetilde{\calL}')$ satisfies LSI with constant $c$ for
\[ c := \beta m \exp(-\beta (3 \threshold + M p \lambda^2)).
\]
\end{lem}

Recall the SGLD equation:
\begin{equation}\label{eq:dynamics}
    \theta_{t+1} = \theta_t - \eta \beta \nabla \calL(\theta_t; D) + \xi_t, \quad \xi_t \sim \calN(0, 2 \eta I_p).
\end{equation}

We can now apply the following result of \cite{ChewiLSILD}:

\begin{thm}
[Theorem 4 of \cite{ChewiLSILD}]
\label{lem:sgld-renyi}
For any $T$ and $\eta$, let $\Theta_T$ be the distribution of $\theta_{T\eta}$ given by the solution to \eqref{eq:LD}, and let $\Theta_T'$ be the distribution of $\theta_T$ given by \eqref{eq:dynamics}. Suppose $\calL$ is $M$-smooth and satisfies LSI (Definition~\ref{def:LSI}) with constant $c$. Then for any $\gap > 0, \alpha \geq 2$,

\[\eta = O\left(\frac{c \gap}{p \alpha M^2 } \cdot \min\left\{\frac{1}{\log(\alpha)}, \frac{p}{\alpha \gap}\right\}\right), \quad \text{and} \quad T = \Omega\left(\frac{p \alpha^2 M^2}{c^2 \gap}r \cdot \max\left\{\log(\alpha), \frac{\alpha \gap}{p}\right\}\right),\]
we have $R_\alpha(\Theta'_T, \Theta_\infty) \leq \gap.$
\end{thm}

We can now prove a general result in terms of R\'enyi divergence:

\begin{thm}\label{thm:sgld-sampler-renyi} 
Suppose $\ltwo{\nabla \calL(\theta; D) - \nabla \calL(\theta; D')} \leq \Delta$, and that each individual loss function $\ell$ is $m$-strongly convex and $M$-smooth. Let $0 \leq \lambda \leq \sqrt{\frac{\threshold}{Mp}}$ and let $$\widetilde{\calL} := \mathbb{E}_{\xi \sim N(0, \lambda^2 \mathbb{I}_p)} \left[\min\{\calL(\theta + \xi; D), \min\limits_{\theta^* \in \calC}\calL(\theta^*; D) + \threshold\}\right].$$ 
Let $Q$ be the (unconstrained) Gibbs distribution of $\beta \cdot \widetilde{\calL}$. Let $\Theta_T'$ be the solution to \eqref{eq:dynamics} starting from the distribution $\Theta_0'$, run on the loss $\widetilde{\calL}$. Fix any $r > 0$.
Then for 
\begin{align*}
\beta &= O\left(\frac{\epsilon^2 m}{\max\{\Delta^2, M \threshold\} \alpha \log(R_\alpha(\Theta_0', Q) / \gap) \log(1/\delta)}\right), \\
T &= \Omega\left(\frac{p \alpha^4 \log^3(R_\alpha(\Theta_0', Q) / \gap) (M^2 + \frac{M \threshold}{\lambda^2}) \max\left\{\Delta^4 , M^2 \threshold^2\right\} \log^2(1/\delta)}{\epsilon^4 m^4 \gap} \cdot \max\left\{\log(\alpha), \frac{\alpha \gap}{p}\right\}\right)
\end{align*}
and an appropriate choice of $\eta$, outputting a sample from $\Theta_T'$ is $(\epsilon, \delta)$-DP. Furthermore,

\[R_\alpha(\Theta_T', Q) \leq \gap .\]
\end{thm}
\begin{proof}
In Theorem~\ref{lem:sgld-renyi} we need:

\[T\eta = \Theta\left(\frac{\alpha}{c}\log\left( \frac{R_\alpha(\Theta_0', Q)}{\gap} \right) \right)\]

Plugging in Lemmas~\ref{lem:sgld-privacy} and~\ref{lem:kernel-lsi} for outputting $\Theta_T'$ from \eqref{eq:dynamics} to be $(\epsilon, \delta)$-DP it suffices if:

\[\beta = O\left(\frac{\epsilon^2 m \exp(-\beta (3 \threshold + M p \lambda^2))}{\max\{\Delta^2, M \threshold\} \alpha \log(R_\alpha(\Theta_0', Q) / \gap) \log(1/\delta)}\right).\]

If we assume $\lambda \leq \sqrt{\frac{\threshold}{Mp}}$, for $\epsilon \leq 1$ this condition implies $\beta = O(\frac{1}{\threshold + M p \lambda^2})$. So this can be simplified to:
\[\beta = O\left(\frac{\epsilon^2 m}{\max\{\Delta^2, M \threshold\} \alpha \log(R_\alpha(\Theta_0', Q) / \gap) \log(1/\delta)}\right).\]

Now we can plug in this value of $\beta$, the smoothness bound from Theorem~\ref{thm:kernel}, Lemma~\ref{lem:kernel-lsi}, and our assumption on $\lambda$ into the number of iterations $T$ to get an iteration complexity requirement of:

\[T = \Omega\left(\frac{p \alpha^4 (M^2 + \frac{M \threshold}{\lambda^2}) \max\{\Delta^4 , M^2 \threshold^2\} \log^2(1/\delta)}{\epsilon^4 m^4 \gap}\log^3(R_\alpha(\Theta_0', Q) / \gap) \max\{\log(\alpha), \frac{\alpha \gap}{p}\}\right).\]
This completes the proof of Theorem~\ref{thm:sgld-sampler-renyi}. 
\end{proof}

In order to bound the initial divergence, we use $\Theta_0'$ that is a normal distribution centered at a point in the convex hull of minimizers of $\ell$, i.e. the convex hull of $\{\argmin_{\theta} \ell(\theta; d) : d \in \tau\}$. 

\begin{lem}\label{lem:initialdivergencebound}
Suppose $\ltwo{\nabla \calL(\theta; D) - \nabla \calL(\theta; D')} \leq \Delta$, and let $\theta_0$ be an arbitrary point in the convex hull of $\{\argmin_{\theta} \ell(\theta; d) | d \in \tau\}$. Let $P$ be the Gibbs distribution on $\beta \cdot \widetilde{\calL}'$ as defined in Theorem~\ref{thm:kernel}. Then for $\alpha \geq 2$:

\[R_\alpha\left(N\left(\theta_0, \frac{1}{\beta M} \mathbb{I}_p\right), P\right) = O\left(\frac{\alpha \Delta \beta^2 M^2}{m^2} + p \ln \paren{\frac{M}m} +  \beta \threshold\right).\]
\end{lem}
\begin{proof}
By e.g. Lemma 16 in \cite{GaneshT20}, if $\theta^*$ is the true minimizer of $\calL$ and $P_1$ is the Gibbs distribution on $\beta \cdot \calL$, for all $\alpha \geq 1$:

\[R_\alpha\left(N\left(\theta^*, \frac{1}{\beta M} \mathbb{I}_p\right), P_1\right) \leq \frac{p \ln \paren{\frac{M}m}}{2}.\]

By Theorem~\ref{thm:kernel}, $\beta \cdot \calL$ and $\beta \cdot \widetilde{\calL}'$ differ by at most $4 \beta \threshold$ everywhere if $\lambda \leq \sqrt{\frac{\threshold}{Mp}}$. Then:

\[R_\infty(P_1, P) \leq 8 \beta \threshold.\]

So by the approximate triangle inequality for R\'enyi divergences (Fact~\ref{fact:renyitriangle}):

\[R_\alpha\left(N(\theta^*, \frac{1}{\beta M} \mathbb{I}_p), P \right) \leq \frac{p \ln \paren{\frac{M}m}}{2} +  8 \beta \threshold\]

Finally, by the assumption on $\nabla \calL$, all points in $\{\argmin_{\theta} \ell(\theta; d) | d \in \tau\}$ are distance at most $\Delta / m$ apart by strong convexity. Then by Lemma~\ref{fact:gaussiandivergence}:

\[R_\alpha\left(N\left(\theta_0, \frac{1}{\beta M} \mathbb{I}_p\right), N\left(\theta^*, \frac{1}{\beta M} \mathbb{I}_p\right) \right) \leq \frac{\alpha \Delta \beta^2 M^2}{2 m^2} \]

Applying Fact~\ref{fact:renyitriangle} again gives Lemma~\ref{lem:initialdivergencebound}.
\end{proof}

Now by Pinsker's inequality and monotonicity of R\'enyi divergences, we can use Lemma~\ref{lem:initialdivergencebound}, $\alpha = 2$, and $\gap = \sqrt{\delta / 2}$ to get a total variation distance bound of $\delta$ to the stationary distribution in Theorem~\ref{thm:sgld-sampler-renyi}, proving Theorem~\ref{thm:sgld-sampler-tvd}.
\section{Lower Bound on DP-ERM for Non-Convex Losses}
\label{sec:lb}

In this section, we show the following lower bound on the excess empirical risk for $1$-Lipschitz non-convex loss functions. The lower bound implies that that there is no advantage, in terms of the dependence on dimensions ($p$), to move from $\epsilon$-DP to $(\epsilon,\delta)$-DP. 

\begin{theorem}
\label{thm:lower}
Let $\epsilon \leq 1$,  $2^{-\Omega(n)} \leq \delta \leq 1/n^{1+ \Omega(1)}$, and $B(\mathbf 0,1)$ be a unit Euclidean ball centered at origin. 
Then there exists $1$-Lipschitz non-convex function $\mathcal L: B(0,1) \times \mathcal X \rightarrow \mathbb R$ and a dataset \footnote{The dataset, $\dataset = \set{d_1, \cdots, d_n}$ is such that $d_i \in \set{0,1}^s$ for all $i \in [n]$, $d_{i,j}$ is independent (conditioned on $P$) and $\E[d_{i,j}] = P^j$ for all $i \in [n]$ and $j \in [s]$. Here $\calP$ is the distribution that is defined in \Cref{thm:steinke-ullman2017}.} $\dataset = \set{d_1, \cdots, d_n}$ such that for every $p \in \mathbb N$, there is no $(\epsilon,\delta)$-differentially private algorithm $\mathcal A$ that outputs $\thetapriv$ such that 
\begin{align}
\mathbb E \sparen{\mathcal L(\thetapriv; \dataset) - \min_{\theta \in \mathbb B_p(1)} \mathcal L(\theta; \dataset) } = o \paren{\frac{p\log \paren{{1}/{\delta}}}{n\epsilon} },
\label{eq:nonconvexloss}    
\end{align} 
\end{theorem}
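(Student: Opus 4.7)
The plan is to reduce top-$1$-selection over an exponentially large universe to DP-ERM on a purpose-built $1$-Lipschitz non-convex loss on the unit ball, and then invoke Theorem~\ref{thm:steinke-ullman2017} after converting its $(1,1/(ns))$-DP statement into an $(\epsilon,\delta)$-DP lower bound via the standard fingerprinting-code amplification of~\cite{bun2018fingerprinting,steinke2015between}, which is valid precisely in the regime $2^{-\Omega(n)} \leq \delta \leq 1/n^{1+\Omega(1)}$.

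First I would fix an absolute constant $\alpha \in (0,1)$ (say $\alpha = 1/4$) and take a maximal $\alpha$-packing $\{v_1,\ldots,v_s\}$ of the unit ball $B(\boldzero,1)$, which has $s \geq (1/\alpha)^p$ by the usual volume bound. Define the $1$-Lipschitz bump $\phi(r) = \max\{0,\alpha/4-r\}$, supported on $[0,\alpha/4)$, and take the data domain $\{0,1\}^s$ with the loss
\[
\ell(\theta;d) \;=\; -\sum_{j=1}^{s}\phi(\ltwo{\theta - v_j})\, d_j.
\]
Because the balls $B(v_j,\alpha/4)$ are pairwise disjoint, at most one summand is ever nonzero at any $\theta$, so $\ell(\cdot;d)$ is $1$-Lipschitz and visibly non-convex, with multiple local minima. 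Writing $f_j := \tfrac{1}{n}\sum_i d_{i,j}$, $j^\star := \argmax_j f_j$, and letting $j(\theta)$ be the unique index (if any) with $\theta \in B(v_{j(\theta)},\alpha/4)$ (and $f_{j(\theta)} := 0$ if $\theta$ lies in no ball), the empirical minimum equals $-\tfrac{\alpha}{4}f_{j^\star}$ and is attained at $v_{j^\star}$, so that
\[
\calL(\theta;\dataset) - \min_{\theta'}\calL(\theta';\dataset) \;\geq\; \tfrac{\alpha}{4}\bigl(f_{j^\star} - f_{j(\theta)}\bigr).
\]

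Next, given any $(\epsilon,\delta)$-DP algorithm $\mathcal A$ achieving expected excess empirical risk $\gamma$, the top-$1$-selector $\mathcal A'$ that outputs $j^{\mathsf{priv}} := j(\thetapriv)$ is $(\epsilon,\delta)$-DP by post-processing and satisfies $\Exp{f_{j^{\mathsf{priv}}}} \geq \Exp{f_{j^\star}} - 4\gamma/\alpha$. Running $\mathcal A'$ on the Beta$(\beta,\beta)$ data model of Theorem~\ref{thm:steinke-ullman2017}, an accuracy-parametrized, $(\epsilon,\delta)$-DP variant of that theorem, obtained by combining the $k=1$ fingerprinting-code argument with the privacy amplification of~\cite{bun2018fingerprinting,steinke2015between}, will force
\[
n \;=\; \Omega\!\left(\frac{\log(s)\,\log(1/\delta)}{\epsilon\,(4\gamma/\alpha)}\right) \;=\; \Omega\!\left(\frac{p\,\log(1/\delta)}{\epsilon\,\gamma}\right),
\]
using $\log s = \Omega(p)$ since $\alpha$ is a constant. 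Rearranging yields $\gamma = \Omega(p\log(1/\delta)/(n\epsilon))$, contradicting the hypothesis.

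The hard part will be this last step: extracting an accuracy-dependent, $(\epsilon,\delta)$-DP version of Theorem~\ref{thm:steinke-ullman2017} with the correct $\log(1/\delta)/\epsilon$ factor rather than merely $1/\epsilon$. This is exactly where the hypotheses $\epsilon \leq 1$ and $2^{-\Omega(n)} \leq \delta \leq 1/n^{1+\Omega(1)}$ come in: they carve out the range in which the standard fingerprinting-code amplification lifts the $(1,1/(ns))$-DP statement of Theorem~\ref{thm:steinke-ullman2017} to the desired $(\epsilon,\delta)$-DP lower bound while preserving the Beta-distribution data construction that drives the reduction; the packing step and the reduction of excess risk to selection accuracy are entirely routine by comparison.
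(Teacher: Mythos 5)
Your proposal is correct and follows essentially the same route as the paper: reduce top-$1$ selection over an exponentially large packing of the unit ball to DP-ERM via post-processing $\thetapriv$ to its nearest packing center, then invoke the accuracy-parametrized, $(\epsilon,\delta)$-DP translation of Theorem~\ref{thm:steinke-ullman2017} exactly as the paper does (citing the same amplification references and treating that translation as the black-box crux). The only difference is cosmetic: you use disjointly supported bumps $-\sum_j \phi(\ltwo{\theta-v_j})d_j$, which is genuinely $1$-Lipschitz, whereas the paper uses $\ell(\theta;d_i)=\min_{j}\left(\ltwo{\theta-\bfc_j}/\alpha-1\right)d_{i,j}$ and verifies $1/\alpha$-Lipschitzness; both gadgets encode the identical selection-to-excess-risk correspondence and yield the same bound up to constants.
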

\begin{proof}
We first perform two translations of  \Cref{thm:steinke-ullman2017}: first from $(1, \frac{1}{ns})$ to $(\epsilon,\delta)$ from~\cite{steinke2015between} and then from sample complexity to a result stated in the terms of accuracy bound. A direct corollary of \Cref{thm:steinke-ullman2017} with $k=1$ is as follows: for every $s \in \mathbb N$, no $(\epsilon,\delta)$-differentially private algorithm on input $X$ satisfying the premise of \Cref{thm:steinke-ullman2017} outputs an index $j \in [s]$ such that 
\begin{align}
\E_{\mathcal M} \sparen{ \frac{1}{n}  \sum_{i=1}^n \bfx_{i,j}} - { \max_{u \in [s]} \frac{1}{n}\sum_{i=1}^n \bfx_{i,u}}  = o \paren{\frac{1}{n\epsilon}\log(s) \log \paren{{1}/{\delta}}},
\label{eq:steinke_ullman_top}    
\end{align}
where $\epsilon \leq 1$ and $2^{-\Omega(n)} \leq \delta \leq 1/n^{1+\Omega(1)}$.

Using this lower bound on top-selection, we give our lower bound by defining an appropriate non-convex loss function. In particular, we define a packing over the $p$-dimensional Euclidean ball such that there is an bijective mapping between the centers of the packing and $[s]$. Then the function attains the minimum at the center of packing which corresponds to the coordinate $j \in [s]$ with maximum frequency. Since the size of the $\alpha$-net is $\approx 1/\alpha^p$ and there is a bijective mapping, this gives a lower bound using \cref{eq:steinke_ullman_top}.

Let $B(\mathbf 0,1)$ be the $p$-dimensional Euclidean ball centered at origin and let $\alpha \in (0,1/2)$ be a constant. Consider an $\alpha$-packing with centers $C = \set{\bfc_1, \bfc_2, \cdots,  }$. It is known that the size of such packing, $N(\alpha)$ is  $\paren{\frac{1}{\alpha}}^p \leq N(\alpha) \leq \paren{\frac{3}{\alpha}}^p$. Let $s=N(\alpha)$. Further, let $f: B(\mathbf 0,1) \to \set{1,\cdots, s}$ be an injective function defined as follows:
\[
f(\theta) = \set{j :  \mathbf{c}_j = \argmin_{\mathbf c \in C} \norm{\theta - \mathbf c}_2}.
\]
In particular, $f$ is the function that maps a point on the unit ball to its closest point in $C$.

We now define our loss function as follows:
\begin{align}
\mathcal{L} (\theta; \dataset) := \frac{1}{n} \sum_{d_i \in D} \ell(\theta;d_i) ~\text{where}~   \ell(\theta; d_i) = \min_{\mathbf c_j \in C} \paren{ \frac{\norm{\theta - \mathbf{c}_j}}{\alpha} - 1} {d_{i,j}} .
\label{eq:nonconvexlossfunction}
\end{align}

For Lipschitz property, note that each loss function is $1/\alpha$-Lipschitz because the gradient when it is defined is just $\frac{\theta - \bfc_j}{\alpha \ltwo{\theta - \bfc_j}}$. We prove it formally. 

Consider any $\theta, \theta'$ in $B(\boldzero, 1)$ and a data point $d_i \in D$. We wish to show $|\ell(\theta; d_i) - \ell(\theta'; d_i)| \leq \frac{1}{\alpha}\ltwo{\theta - \theta'}$. We can split the line segment from $\theta$ to $\theta'$ into a sequence of line segments $$(\theta_0, \theta_1), (\theta_1, \theta_2), \ldots, (\theta_{k-1}, \theta_k),$$ where $\theta_0 = \theta, \theta_k = \theta'$, such that for any line segment $(\theta_m, \theta_{m+1})$, $\theta_m$ and $\theta_{m+1}$ share a minimizer in $C$ of $\left(\frac{\ltwo{\theta - \bfc_j}}{\alpha}\right)d_{i, j}$.\footnote{In particular, for each $\bfc_j$ let $B_j$ be the set of points in $B(\boldzero, 1)$ such that $\bfc_j$ is a minimizer of $\left(\frac{\ltwo{\theta - \bfc_j}}{\alpha}\right)d_{i, j}$. We can split the line segment from $\theta$ to $\theta'$ at each point where it enters or leaves some $B_j$ to get this sequence of line segments, and by this construction each line segment's endpoints are both in $B_j$ for some $j$.}

It now suffices to show $|\ell(\theta_m; d_i) - \ell(\theta_{m+1}; d_i)| \leq \frac{1}{\alpha} \ltwo{\theta_m - \theta_{m+1}}$ for each $m$, since we then have:

\[|\ell(\theta; d_i) - \ell(\theta'; d_i)| \leq \sum_{m=0}^{k-1}  |\ell(\theta_{m}; d_i) - \ell(\theta_{m+1}; d_i)| \leq \frac{1}{\alpha}\sum_{m=0}^{k-1} \ltwo{\theta_{m} - \theta_{m+1}} = \frac{1}{\alpha}\ltwo{\theta - \theta'}.\]

Let $\bfc_j$ be a shared minimizer of $\left(\frac{\ltwo{\theta - \bfc_j}}{\alpha}\right)d_{i, j}$ for $\theta_m$ and $\theta_{m+1}$. If $d_{i, j} = 0$, then trivially $|\ell(\theta_m; d_i) - \ell(\theta_{m+1}; d_i)| \leq \frac{1}{\alpha} \ltwo{\theta_m - \theta_{m+1}}$. Otherwise $d_{i,j} = 1$ and by triangle inequality, we have:

\[|\ell(\theta_m; d_i) - \ell(\theta_{m+1}; d_i)| = \left|\frac{\ltwo{\theta_{m} - \bfc_j}}{\alpha} - \frac{\ltwo{\theta_{m+1} - \bfc_j}}{\alpha}\right| \leq \frac{1}{\alpha} \ltwo{\theta_m - \theta_{m+1}}.\]

Now let us suppose there is an $(\epsilon,\delta)$-differentially private algorithm $\mathcal A$ that on input a non-convex function $\mathcal L$ and $n$ data points $\set{d_1,\cdots, d_n}$, outputs a $\thetapriv$ such that 
\begin{align}
\E_{\mathcal A} \sparen{\mathcal L(\thetapriv; \dataset)} - {\min_{\theta \in  B(1)} \mathcal L(\theta; \dataset) }   = o \paren{\frac{p\log \paren{{1}/{\delta}}}{n\epsilon} },
\label{eq:nonconvexlossalternate}    
\end{align}
where $ D = \set{d_1, \cdots,d_n}$.

We will construct an algorithm that uses $\mathcal A$ as subroutine and solve top-selection problem with an error $o(\log (s))$, contracting the lower bound of \Cref{thm:steinke-ullman2017}.

Algorithm $\mathcal B$:
\begin{itemize}
    \item On input $X =\set{\bfx_1,\cdots, \bfx_n}$, invokes $\mathcal A$ on the function defined by \cref{eq:nonconvexlossfunction} and data points $X$ to get $\thetapriv$ as output.
    \item Output $f(\thetapriv)$.
\end{itemize}

Since the last step is post-processing, $\mathcal B$ is $(\epsilon,\delta)$-differentially private. We now show that if $\mathcal A$ outputs a $\thetapriv$ satisfying \cref{eq:nonconvexlossalternate}, then $j:=f(\thetapriv)$ satisfies \cref{eq:steinke_ullman_top} leading to a contradiction.

First note that, for any $\mathbf c \in C$ and all $\theta \in \mathbb B_p(\mathbf c,\alpha)$ such that $\norm{\theta - \mathbf c}_2 \leq \frac{\alpha}{2}$, 
\[
\mathcal L(\mathbf c; \dataset) = - \frac{1}{n} \sum_{i=1}^n \bfx_{i,f(\mathbf c)} \leq   \mathcal L(\theta; \dataset). 
\]

Therefore, 
\begin{align*}
\mathcal L(\theta^*;X) := \min_{\mathbf c \in C} {\mathcal L(\mathbf c,X)} = \min_{\mathbf c \in C} \paren{ -\frac{1}{n} \sum_{i=1}^n \bfx_{i,f(\mathbf c)} } \end{align*}

This implies that 
\[
f(\theta^*) = \argmax_{1 \leq j \leq s} \frac{1}{n} \sum_{i=1}^n \bfx_{i,j},  
\]
which is exactly the top-selection problem. Therefore, \cref{eq:nonconvexlossalternate} implies \cref{eq:steinke_ullman_top} 
because $p  \log \left(\frac{1}\alpha \right) \leq  \log(s) \leq p \log \left(\frac{3}\alpha \right)$ and $\alpha \in (0,1/2)$ is a constant.

\end{proof}

\end{document}